\documentclass{abel}

\usepackage{amssymb}
\usepackage{xspace}
\usepackage{enumitem}
\usepackage{tabularx}
\usepackage{longtable}


\theoremstyle{definition}

\renewcommand{\theassumption}{A\arabic{assumption}}
\crefname{assumption}{assumption}{assumptions}
\Crefname{assumption}{Assumption}{Assumptions}

\tcbset{
  paper theory/.style={
    enhanced,
    breakable,
    colback=abelaqua!15!abelpaperwhite,
    colframe=abeldeepteal,
    boxrule=0.65pt,
    arc=2.2mm,
    left=2.5mm, right=2.5mm, top=2mm, bottom=2mm,
    before skip=7pt, after skip=7pt,
  },
  paper proof/.style={
    enhanced,
    breakable,
    colback=abelsoftcanvas,
    colframe=abelnearblack!30!abelpaperwhite,
    boxrule=0.55pt,
    arc=2.2mm,
    left=2.5mm, right=2.5mm, top=2mm, bottom=2mm,
    before skip=5pt, after skip=7pt,
    before upper={\setlength{\parskip}{\medskipamount}},
  },
  paper callout/.style={
    enhanced,
    breakable,
    colback=abelherowash,
    colframe=abelherowash,
    boxrule=0pt,
    arc=0mm, outer arc=0mm,
    borderline west={2.2pt}{0pt}{abelsignalteal},
    left=4mm, right=3.5mm, top=2.4mm, bottom=2.4mm,
    before skip=7pt, after skip=7pt,
  },
}
\newtcolorbox{paperassumptions}[1][]{paper theory, fonttitle=\bfseries, #1}
\tcolorboxenvironment{theorem}{paper theory}
\tcolorboxenvironment{proposition}{paper theory}
\tcolorboxenvironment{lemma}{paper theory}
\tcolorboxenvironment{corollary}{paper theory}
\tcolorboxenvironment{definition}{paper theory}
\tcolorboxenvironment{example}{paper theory}
\tcolorboxenvironment{proof}{paper proof}

\newcommand{\SNR}{\mathrm{SNR}}
\newcommand{\IC}{\mathrm{IC}}
\newcommand{\MSE}{\mathrm{MSE}}
\newcommand{\Var}{\operatorname{Var}}
\newcommand{\Cov}{\operatorname{Cov}}
\newcommand{\Corr}{\operatorname{corr}}
\newcommand{\E}{\mathbb{E}}
\newcommand{\yhat}{\hat{y}}
\newcommand{\eps}{\varepsilon}
\newcommand{\Ft}{\mathcal{F}_{t-1}}
\newcommand{\ampopt}{\mathcal{A}^{\star}}
\newcommand{\ampraw}{\widetilde{\mathcal{A}}}
\newcommand{\method}{\textsc{CalibRank}\xspace}
\newcommand{\dataset}{Finance1K\xspace}

\newcommand{\tablehead}[1]{{\bfseries\boldmath #1}}

\AtBeginDocument{\DeclareFontShape{T1}{lmr}{bx}{sc}{<->ssub*lmr/bx/n}{}}

\graphicspath{{figures/}}

\renewcommand\authorformat[2][]{\mbox{\small\sffamily\bfseries\color{abelhi} #2$^{#1}$}}
\title{Forecast Collapse in Time-Series Foundation Models}
\author[1,2,*]{Shu Wan}
\author[1,*]{Miles Ma}
\author[1]{Hank Zhu}
\author[1]{Guangqi Liu}
\author[1]{Stephen Wang}
\author[3]{Qingsong Wen}
\author[2]{Huan Liu}
\affiliation[1]{Abel AI Lab}
\affiliation[2]{Arizona State University}
\affiliation[3]{University of Oxford}
\contribution[*]{Equal contribution}
\date{\today}
\metadata[Data]{\url{https://huggingface.co/datasets/abel-lab/finance1k}}
\correspondence{Shu Wan \email{swan@asu.edu}, Stephen Wang \email{stephen@abel.ai}}

\abstract{%
When forecasting hourly returns for 1,000 US equities, we observe an unexpected phenomenon: predictions become nearly flat and show poor stock ranking, as measured by cross-sectional correlation. We call this \emph{forecast collapse}. Surprisingly, the phenomenon largely disappears when forecasting trading volume under the same setting. We investigate forecast collapse across time-series foundation models (TSFMs), twelve deep-learning forecasting models, and 97 public benchmark configurations, and find that it is closely tied to target predictability. We identify two distinct reasons behind it: low predictability limits the amplitude of calibrated point forecasts, while per-series objectives leave cross-series structure unidentified. These findings reveal a calibration-ranking tradeoff: optimizing squared error leads to flat predictions, whereas directly optimizing cross-sectional correlation improves ranking but can inflate forecast amplitude by more than an order of magnitude. To address this tradeoff, we introduce \method, a simple objective that balances calibration and ranking. On \dataset, \method nearly triples cross-sectional correlation while keeping amplitude close to the target, and improves correlation on all tested models. Our results reveal a blind spot in conventional time-series evaluation: per-series metrics can hide failures in cross-series structure needed by downstream decisions.
}

\begin{document}
\maketitle

\section{Introduction}
\label{sec:intro}

\begin{figure}[!t]
  \centering
  \includegraphics[width=\textwidth]{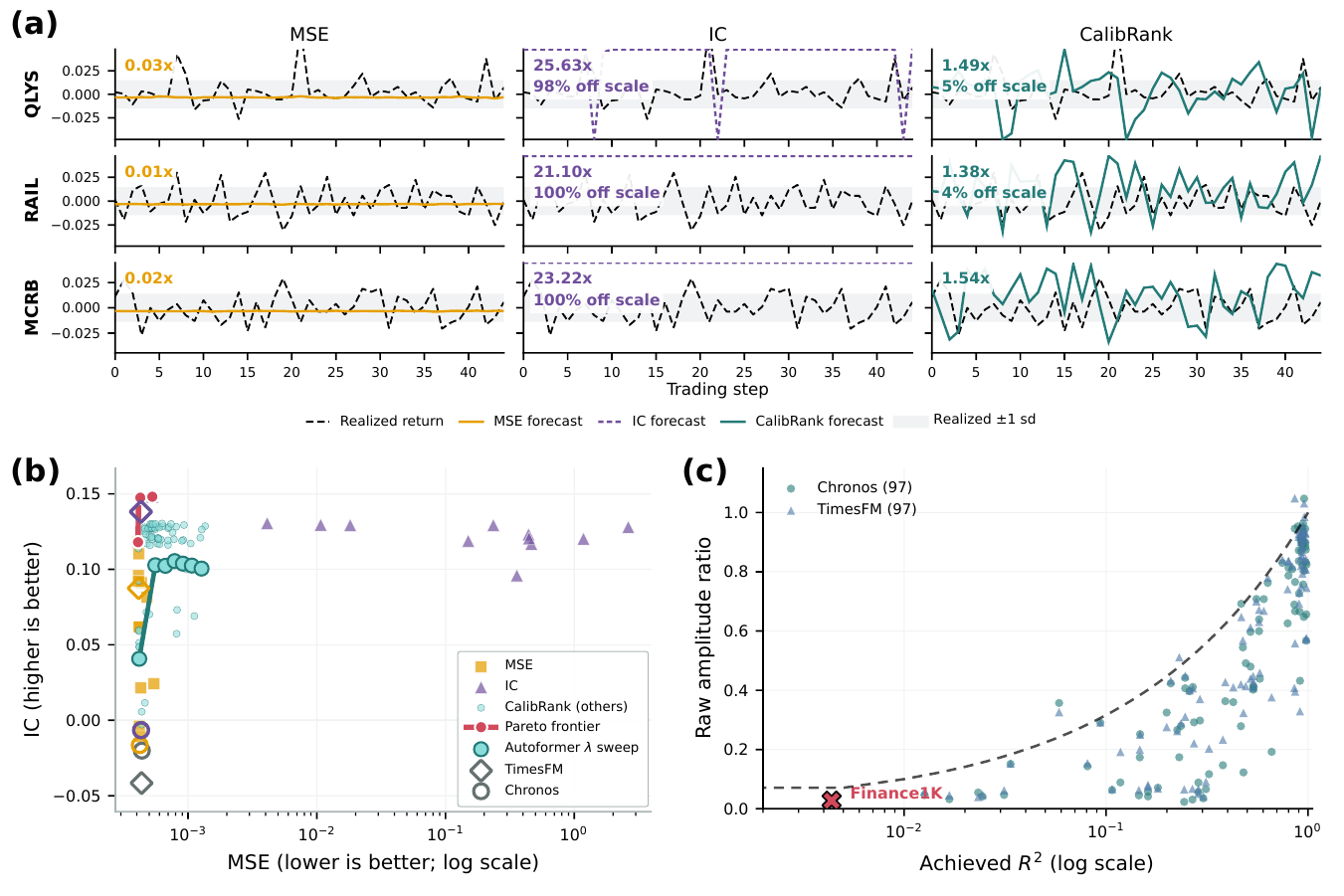}
  \caption{\textbf{Forecast collapse and the calibration-ranking tradeoff.}
    \textbf{(a)} Realized and predicted returns for three equities under MSE,
    IC, and \method; labels report raw amplitude relative to the target.
    \textbf{(b)} MSE against cross-sectional IC across forecasting models and
    objectives; lines mark the empirical Pareto frontier and the Autoformer
    $\lambda$ sweep. \textbf{(c)} Raw forecast amplitude against achieved
    $R^{2}$ for TimesFM and Chronos across 97 GIFT-Eval
    configurations~\citep{aksu2024gift}. \dataset is far less predictable than
    any of them.}
  \label{fig:figure123}
\end{figure}

Time series foundation models (TSFMs)~\citep{liang2024foundation} are large,
pre-trained models designed to forecast across diverse domains with little or
no dataset-specific tuning. Financial markets provide a particularly
challenging stress test. Equity returns have notoriously low signal-to-noise
ratios, yet many financial decisions depend not only on the
accuracy of individual forecasts but also on their relative rank across
many assets, measured by the correlation between predicted and realized
values at each timestamp, commonly known as the information coefficient (IC).

When we apply off-the-shelf TSFMs~\citep{das2024timesfm,ansari2024chronos} to
hourly US equity returns, something unexpected happens: the
predictions are almost flat near zero, and are poorly correlated with the actual stock ranking
(\Cref{fig:figure123}(b)). This finding echoes a recent large-scale
evaluation~\citep{rahimikia2025revisiting} on equity excess returns, which
finds that TSFMs are weak both zero-shot and after fine-tuning. We
refer to this combination of low forecast amplitude and poor cross-sectional
structure as \emph{forecast collapse}.

Surprisingly, forecast collapse largely disappears when we change the target
from equity returns to traded volume on the same stock panel while keeping the
forecasting protocol fixed. The contrast suggests that forecast collapse is
not simply a failure of a particular model or implementation. Instead, it
raises a more fundamental question:

\begin{center}
\emph{Why does the same forecasting pipeline collapse for one target but not
for another?}
\end{center}

We provide a theoretical explanation for forecast collapse through two
distinct mechanisms. First, low target predictability limits the amplitude of
calibrated point forecasts: when little of the target variation is
predictable, calibrated forecasts necessarily shrink relative to the realized
target. Second, per-series objectives leave cross-series structure
unidentified: forecasts with the same per-series risk can have different
cross-sectional correlations. We formalize the first mechanism through an
exact relation between best-scaled amplitude, correlation, and predictable
variance, and the second through an invariance result for per-series risk.

These two results lead to a \emph{calibration-ranking tradeoff}.
A per-series calibration objective, such as mean squared error (MSE), scores
each forecast against its own target but does not directly optimize
cross-sectional ranking. Under low predictability, the resulting calibrated
point forecast can have low amplitude. Rescaling can change
forecast amplitude but cannot recover missing ranking information. In
contrast, directly optimizing IC improves ranking but does not constrain
forecast scale; in our experiments, this produces forecasts over 20 times the
target amplitude (\Cref{fig:figure123}(a)). This tradeoff
suggests that calibration and cross-sectional ranking should be considered
jointly rather than evaluated through either objective alone. Based on this
analysis, we introduce \method, which combines MSE with cross-sectional
correlation, with $\lambda$ controlling their relative importance.

We test these theoretical claims using \dataset, a panel of hourly observations for
1,000 US equities with aligned return and traded-volume targets. Its matched
targets allow us to compare a low-predictability target with a more
predictable target while keeping the forecasting setting fixed. We use
synthetic data with known signal and noise to verify the theoretical
relations directly, and vary $\lambda$ to trace the calibration-ranking
tradeoff predicted by the analysis. We then test whether the same behavior
holds across twelve forecasting architectures and examine the relation
between forecast amplitude and achieved predictability for TimesFM and
Chronos across 97 public GIFT-Eval configurations
(\Cref{fig:figure123}(c)). These experiments serve as tests of the theoretical
claims and of \method as a practical way to balance calibration and
cross-sectional ranking.

More broadly, forecast collapse exposes a limitation of conventional
time-series evaluation. When a multivariate panel is evaluated as independent
series, per-series accuracy can remain informative while the cross-series
structure used by downstream decisions is never measured. Forecasting
benchmarks should therefore preserve and evaluate this structure when it
matters to the downstream task.

Our contributions are:
\begin{itemize}[leftmargin=1.2em,itemsep=1pt,topsep=2pt]
  \item We identify \emph{forecast collapse}, low forecast amplitude together
    with poor cross-sectional structure, and show a sharp target contrast
    between low-predictability stock returns and higher-predictability traded
    volume under a matched forecasting protocol.
  \item We provide a theoretical explanation for forecast collapse by
  characterizing two distinct mechanisms. Predictable variance limits the
  amplitude of calibrated point forecasts, while per-series risk leaves
  cross-series coupling unidentified. We derive the exact relation between
  best-scaled amplitude, forecast-target correlation, and predictable
  variance.
  \item We characterize the resulting calibration-ranking tradeoff and
  introduce \method, a simple objective that balances calibration and
  cross-sectional ranking. We validate the predicted tradeoff across twelve
  forecasting models and show that \method improves cross-sectional
  correlation for every model while maintaining control of forecast scale.
  \item We release \dataset, an hourly panel of 1,000 US equities with aligned
return and volume targets, which supports controlled comparisons of
target predictability and training objectives under the same forecasting
protocol.
\end{itemize}

\FloatBarrier
\section{Related Work}
\label{sec:related}

\paragraph{Ceilings and collapse.}
\citet{green2026expectations} studies how mean-optimal forecasting
under-disperses the marginal law, which is what a raw amplitude below one
records. \citet{andreoletti2026collapse} independently documents
low-amplitude transformer forecasts on returns. Our analysis relates
best-scaled amplitude to a declared predictability ceiling and separately
measures cross-sectional ordering. \citet{zhang2026nontrivial} bounds
out-of-sample $R^{2}$ through hit rates, while ours uses
predictable variance. Related variance-share limits appear in psychometrics and
ecological forecasting \citep{shrout1979intraclass,pennekamp2019intrinsic}.
Classical measurement-error theory supplies the attenuation identity used in
our derivation \citep{fuller1987measurement,carroll2006measurement}. Here it
connects a familiar statistical limit to best-scaled forecast amplitude and
separates that quantity from raw model scale.

\paragraph{Related ideas in other domains.}
Image restoration has shown that distortion and perceptual quality can trade
off along a frontier \citep{blau2018perception,freirich2021distortion}, motivating
structural criteria beyond pixelwise $L_{2}$
\citep{wang2004ssim,wangbovik2009mse}. This is a useful precedent for evaluating
more than pointwise error, but the quantities differ: its perception index is a
divergence between marginal laws, whereas IC measures dependence between
forecasts and targets. Deterministic weather models provide a second parallel:
training toward a conditional mean can produce smooth, low-amplitude fields,
which has motivated generative ensembles that preserve variability
\citep{price2025gencast}.

\paragraph{Regression and ranking objectives.}
Composite regression-ranking losses are established in stock prediction and
information retrieval. \citet{feng2019temporal} combines squared error with a
pairwise hinge, \citet{lin2026lambdarankic} optimizes rank correlation, and
\citet{yang2020qlib} provides correlation losses in a forecasting toolkit.
Differentiable ranking operators are also well developed
\citep{blondel2020fast,swezey2021pirank}. \citet{bai2023rcr} shows that carefully
designed listwise objectives can reduce conflicts between regression and
ranking. These results motivate stronger composites that may improve on the
simple scalarization evaluated here.

Section~\ref{sec:objectives} places this comparison in the current
TSFM landscape. Fifteen surveyed models disclose a forecasting loss, four use a
squared-error-like loss, and the remainder use quantile, likelihood,
cross-entropy, or flow-matching objectives. None of the eighteen sourced
training objectives scores a whole cross-section at once. The issue
is therefore broader than MSE: point forecasts have different calibration
properties, but per-series scoring leaves the same multivariate axis absent.

\paragraph{Joint and foundation-model forecasting.}
Probabilistic forecasters such as GPVar and TACTiS model richer dependence
directly \citep{salinas2019gpvar,drouin2022tactis}. \method addresses the narrower
cross-sectional functional the downstream decision uses. Current TSFMs
span autoregressive, masked-reconstruction, and patch-based objectives
\citep{das2024timesfm,ansari2024chronos,woo2024moirai,goswami2024moment,
ekambaram2024ttm,rasul2023lagllama}, yet the sourced objectives in our survey do
not score the cross-section jointly. The competitiveness of simple forecasting
baselines \citep{zeng2023transformers} further supports evaluating objectives
and protocols independently of architectural scale.

\section{Preliminaries}
\label{sec:preliminaries}

This section fixes the objects that are easy to conflate in panel forecasting:
the information available before a prediction, the predictable component of
the target, the point forecast a model outputs, and the cross-sectional
functional evaluated downstream. It then places these objects against the
training objectives used by current time-series foundation models.

\subsection{Forecasting panel and evaluation unit}
\label{sec:notation}

At forecast timestamp $t$, the $N$ series form one cross-section. A model uses the
history before $t$ to produce one point forecast $\yhat_{it}$ for each realized
target $y_{it}$. We collect the targets and forecasts into vectors and define
temporal reporting from timestamp-level statistics:
\begin{equation}
\begin{aligned}
  \mathbf{y}_{t}
  &:= (y_{1t},\ldots,y_{Nt})^{\top},
  & \widehat{\mathbf{y}}_{t}
  &:= (\yhat_{1t},\ldots,\yhat_{Nt})^{\top}, \\
  g_t
  &:= g(\widehat{\mathbf{y}}_{t},\mathbf{y}_{t}),
  & \overline{g}
  &:= \frac{1}{T}\sum_{t=1}^{T} g_t .
\end{aligned}
\label{eq:panel-evaluation}
\end{equation}
Thus the cross-section at one timestamp is the evaluation unit, and timestamps
are the replication unit. In this paper, $g_t$ includes cross-sectional
correlation, raw amplitude, and per-timestamp squared error. For a complete
panel, averaging per-timestamp squared error is equivalent to averaging all
scalar errors. Correlation is different: pooling all $(i,t)$ pairs would mix
temporal and cross-sectional variation and would no longer score the ordering
available to a decision made at timestamp $t$.

The notation also separates three layers of the argument. The decomposition
$y_{it}=\mu_{it}+\eps_{it}$ describes a target-side predictability limit under
the declared information set. The model error
$\delta_{it}=\yhat_{it}-\mu_{it}$ describes alignment with that predictable
component. Finally, $\IC_t$, raw amplitude, and best-scaled amplitude
describe different properties of the forecast itself. A forecast can therefore
have weak scale, weak ordering, or both; the paper does not treat these as the
same failure. \Cref{tab:notation} groups the recurring symbols by these roles.
Subscripts are suppressed when a statement concerns a generic forecast-target
pair.

\begin{table}[!htbp]
  \centering
  \footnotesize
  \setlength{\tabcolsep}{4pt}
  \renewcommand{\arraystretch}{1.14}
  \begin{tabularx}{\textwidth}{@{}
      >{\raggedright\arraybackslash}p{0.16\textwidth}
      >{\raggedright\arraybackslash}X
      @{\hspace{10pt}}
      >{\raggedright\arraybackslash}p{0.16\textwidth}
      >{\raggedright\arraybackslash}X@{}}
    \toprule
    \multicolumn{2}{@{}c}{\tablehead{Panel, signal, and evaluation}} &
    \multicolumn{2}{c@{}}{\tablehead{Forecast behavior and optimization}} \\
    \cmidrule(r){1-2}\cmidrule(l){3-4}
    \tablehead{Symbol} & \tablehead{Definition and role} &
    \tablehead{Symbol} & \tablehead{Definition and role} \\
    \midrule
    $i,t,N,T,L$
      & Series/time indices; numbers of series and evaluated timestamps; context
        length.
      & $\rho$
      & Alignment $\Corr(\yhat,\mu)$ with the predictable component. \\
    $\mathbf{y}_{t},\widehat{\mathbf{y}}_{t}$
      & Realized and forecast cross-sections at timestamp $t$.
      & $\IC_t,\overline{\IC}$
      & Within-timestamp Pearson correlation and its signed temporal mean. \\
    $\Ft$
      & Information available to every admissible forecast before $t$.
      & $\ampraw_t,\ampopt$
      & Raw output scale and best-scaled amplitude; only the latter obeys the
        amplitude-correlation identity. \\
    $g_t,\overline{g}$
      & Timestamp-level statistic and temporal average in
        \eqref{eq:panel-evaluation}.
      & $s^{\star}$
      & MSE-optimal slope for a centered forecast. \\
    $\mu_{it},\eps_{it},\sigma_i^{2}$
      & Conditional mean, innovation, and irreducible variance
        $\E[\eps_{it}^{2}]$.
      & $\mathcal{L},\lambda$
      & Composite objective and calibration-ranking tradeoff. \\
    $\delta_{it},Q_i$
      & Error $\yhat_{it}-\mu_{it}$ and reducible error
        $Q_i=\E[\delta_{it}^{2}]$.
      & $B,M$
      & Batch timestamps and scalar errors; $M=N\lvert B\rvert$ for a complete
        panel. \\
    $\SNR,R^{2}_{\max},\IC_{\max}$
      & Signal-to-noise ratio, forecastable variance share, and correlation
        ceiling under $\Ft$.
      & $c_t$
      & Predictability proxy used only by the weighted ablation. \\
    \bottomrule
  \end{tabularx}
  \caption{Notation. Quantities indexed by $t$ are computed within a
    cross-section before temporal averaging. Raw amplitude $\ampraw_t$ is
    distinct from best-scaled amplitude $\ampopt$.}
  \label{tab:notation}
\end{table}

\subsection{Training objectives of time-series foundation models}
\label{sec:objectives}

\Cref{tab:objectives} audits the objective stated in each source paper. The
survey covers nineteen models and eighteen sourced objectives; TimeGPT names no
loss function and is omitted from the table. Fifteen models disclose a
forecasting loss, of which four are squared-error-like. The entries excluded
from that denominator train masked reconstruction or fit no loss at all. Three
objectives carry more than one term: Time-MoE adds mixture-of-experts load
balancing, TSPulse a cross-entropy term on the log-magnitude spectrum, and Toto
a Cauchy robust point term. The table marks both cases.

The table asks two questions of each objective. The per-series columns record
which conditional quantity of a single series it identifies. A scoring function
is consistent for a functional when that functional minimizes its expected
score \citep{gneiting2011making}. Squared error and
its robust surrogates pin the conditional mean. A pinball grid pins the levels
it contains, so it delivers the median exactly when $0.5$ lies on the grid,
which it does for all six quantile-trained models. Likelihood, cross-entropy,
flow-matching, and in-context prior fitting fit a predictive distribution, off
which the mean, the median, and arbitrary quantiles can all be read. Likelihood
and cross-entropy are strictly proper scoring rules
\citep{gneiting2007scoring}; flow matching regresses a velocity field rather
than a value, so the same quantities come from its draws. The last
column asks whether the objective scores a whole cross-section at once, and it
answers the same way for every row: in the sourced formulations, the training
score breaks into a sum over individual series or forecast tokens, with no term
that depends on two series together, so nothing in it scores the cross-section
of \eqref{eq:panel-evaluation}. The families therefore differ
widely in what they identify about a single series and agree completely in what
they leave out.

\definecolor{objMSE}{HTML}{B4503A}
\definecolor{objQuantile}{HTML}{2F7D95}
\definecolor{objNLL}{HTML}{63558F}
\definecolor{objXent}{HTML}{9C7622}
\definecolor{objFlow}{HTML}{44795A}
\definecolor{objPrior}{HTML}{7E6350}
\begin{table}[!htbp]
  \centering
  \small
  \caption{\textbf{Training objectives of surveyed TSFMs.} No sourced
    objective scores the cross-section jointly.
    \textsuperscript{\dag}~Not counted among the fifteen disclosed
    forecasting losses. \textsuperscript{\ddag}~Multi-term objective.}
  \label{tab:objectives}
  \setlength{\tabcolsep}{4pt}
  \renewcommand{\arraystretch}{1.15}
  \begin{tabular}{@{}ll cccc c@{}}
    \toprule
    & & \multicolumn{4}{c}{\tablehead{Identified per series}} & \multicolumn{1}{c}{\tablehead{Scored jointly}} \\
    \cmidrule(lr){3-6}\cmidrule(l){7-7}
    \tablehead{Model} & \tablehead{Loss} & \tablehead{Mean} & \tablehead{Median} & \tablehead{Quantiles} & \tablehead{Dist.} & \tablehead{Cross-section} \\
    \midrule
    TimesFM~\citep{das2024timesfm} & {\color{objMSE}Squared error} & {\color{objMSE}$\checkmark$} &  &  &  & $\times$ \\
    TTM~\citep{ekambaram2024ttm} & {\color{objMSE}Squared error} & {\color{objMSE}$\checkmark$} &  &  &  & $\times$ \\
    Timer~\citep{liu2024timer} & {\color{objMSE}Squared error} & {\color{objMSE}$\checkmark$} &  &  &  & $\times$ \\
    Time-MoE~\citep{shi2024timemoe} & {\color{objMSE}Squared error}\textsuperscript{\ddag} & {\color{objMSE}$\checkmark$} &  &  &  & $\times$ \\
    MOMENT\textsuperscript{\dag}~\citep{goswami2024moment} & {\color{objMSE}Squared error} & {\color{objMSE}$\checkmark$} &  &  &  & $\times$ \\
    TSPulse\textsuperscript{\dag}~\citep{ekambaram2025tspulse} & {\color{objMSE}Squared error}\textsuperscript{\ddag} & {\color{objMSE}$\checkmark$} &  &  &  & $\times$ \\
    \addlinespace[2pt]
    Chronos-Bolt~\citep{ansari2024chronosbolt} & {\color{objQuantile}Quantile} &  & {\color{objQuantile}$\checkmark$} & {\color{objQuantile}$\checkmark$} &  & $\times$ \\
    Chronos-2~\citep{ansari2025chronos2} & {\color{objQuantile}Quantile} &  & {\color{objQuantile}$\checkmark$} & {\color{objQuantile}$\checkmark$} &  & $\times$ \\
    TiRex~\citep{auer2025tirex} & {\color{objQuantile}Quantile} &  & {\color{objQuantile}$\checkmark$} & {\color{objQuantile}$\checkmark$} &  & $\times$ \\
    Kairos~\citep{feng2025kairos} & {\color{objQuantile}Quantile} &  & {\color{objQuantile}$\checkmark$} & {\color{objQuantile}$\checkmark$} &  & $\times$ \\
    YingLong~\citep{wang2025yinglong} & {\color{objQuantile}Quantile} &  & {\color{objQuantile}$\checkmark$} & {\color{objQuantile}$\checkmark$} &  & $\times$ \\
    FlowState~\citep{graf2025flowstate} & {\color{objQuantile}Quantile} &  & {\color{objQuantile}$\checkmark$} & {\color{objQuantile}$\checkmark$} &  & $\times$ \\
    \addlinespace[2pt]
    Moirai~\citep{woo2024moirai} & {\color{objNLL}Likelihood} & {\color{objNLL}$\checkmark$} & {\color{objNLL}$\checkmark$} & {\color{objNLL}$\checkmark$} & {\color{objNLL}$\checkmark$} & $\times$ \\
    Lag-Llama~\citep{rasul2023lagllama} & {\color{objNLL}Likelihood} & {\color{objNLL}$\checkmark$} & {\color{objNLL}$\checkmark$} & {\color{objNLL}$\checkmark$} & {\color{objNLL}$\checkmark$} & $\times$ \\
    Toto~\citep{cohen2025toto} & {\color{objNLL}Likelihood}\textsuperscript{\ddag} & {\color{objNLL}$\checkmark$} & {\color{objNLL}$\checkmark$} & {\color{objNLL}$\checkmark$} & {\color{objNLL}$\checkmark$} & $\times$ \\
    \addlinespace[2pt]
    Chronos~\citep{ansari2024chronos} & {\color{objXent}Cross-entropy} & {\color{objXent}$\checkmark$} & {\color{objXent}$\checkmark$} & {\color{objXent}$\checkmark$} & {\color{objXent}$\checkmark$} & $\times$ \\
    \addlinespace[2pt]
    Sundial~\citep{liu2025sundial} & {\color{objFlow}Flow matching} & {\color{objFlow}$\checkmark$} & {\color{objFlow}$\checkmark$} & {\color{objFlow}$\checkmark$} & {\color{objFlow}$\checkmark$} & $\times$ \\
    \addlinespace[2pt]
    TabPFN-TS\textsuperscript{\dag}~\citep{hoo2025tabpfnts} & {\color{objPrior}Prior fit} & {\color{objPrior}$\checkmark$} & {\color{objPrior}$\checkmark$} & {\color{objPrior}$\checkmark$} & {\color{objPrior}$\checkmark$} & $\times$ \\
    \bottomrule
  \end{tabular}
\end{table}

The survey is about scope, not performance. Forecast quality is itself
multi-axis, maximizing sharpness subject to calibration
\citep{gneiting2007calibration}, and a distributional objective can represent
conditional uncertainty even when a single point forecast has low amplitude. Once that forecast is used by a
cross-sectional decision, however, its output scale and within-timestamp
ordering remain separate quantities that must be measured. \Cref{sec:theory}
formalizes the resulting predictability ceiling and coupling gap, and
\Cref{sec:method} introduces the objective used to score the missing functional.

\section{Why Forecasts Collapse}
\label{sec:theory}

Call an objective \emph{pointwise} when its population risk is a sum of
per-series, per-timestamp terms,
$R(\yhat)=\sum_{i,t}\E[\ell(\yhat_{it},y_{it})]$. Squared error, absolute error,
quantile, likelihood, and cross-entropy losses commonly have this form. Low
predictability constrains the amplitude of a calibrated point forecast.
Pointwise scoring creates a separate identification gap for cross-series
dependence.

\subsection{Predictability and best-scaled amplitude}

All quantities in this subsection use the same sampling distribution. For the
Finance1K analysis, the distribution is one timestamp's cross-section; empirical
metrics are averaged only after the per-timestamp quantities are computed. We
suppress the timestamp index below. Let
\begin{equation}
  y=\mu+\eps,\qquad \mu=\E[y\mid\Ft],\qquad \E[\eps\mid\Ft]=0 ,
  \label{eq:decomp}
\end{equation}
where $\Ft$ is the information available to every competing forecast. The law
of total variance gives, with
$\SNR:=\Var(\mu)/\E[\Var(y\mid\Ft)]$,
\begin{equation}
  R^{2}_{\max}:=\frac{\Var(\mu)}{\Var(y)}
  =\frac{\SNR}{1+\SNR},
  \qquad
  \IC_{\max}:=\sqrt{R^{2}_{\max}} .
  \label{eq:volpred}
\end{equation}
This is the forecastable share of variance under the declared information set.
For an $\Ft$-measurable forecast $\yhat$, write
$\rho=\Corr(\yhat,\mu)$. Orthogonality of $\eps$ to $\Ft$ yields
\begin{equation}
  \Corr(\yhat,y)=\rho\,\IC_{\max}.
  \label{eq:attenuation}
\end{equation}
This is the standard attenuation identity
\citep{fuller1987measurement,carroll2006measurement}.

For a centered forecast-target pair, the scalar minimizing
$\E[(s\yhat-y)^{2}]$ is
$s^{\star}=\Cov(\yhat,y)/\Var(\yhat)$.

\begin{proposition}[Amplitude after optimal rescaling]
\label{prop:collapse}
For any centered $(\yhat,y)$ with $\Var(\yhat)>0$,
\begin{equation}
  \ampopt
  :=\frac{\mathrm{sd}(s^{\star}\yhat)}{\mathrm{sd}(y)}
  =\lvert\Corr(\yhat,y)\rvert.
  \label{eq:collapse}
\end{equation}
If, in addition, \eqref{eq:decomp} holds and $\yhat$ is
$\Ft$-measurable, then $\ampopt\le\IC_{\max}$.
\end{proposition}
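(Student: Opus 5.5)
The plan is a direct computation followed by one appeal to the attenuation identity \eqref{eq:attenuation}. Write $\sigma_{\yhat}=\mathrm{sd}(\yhat)>0$ and $\sigma_y=\mathrm{sd}(y)$. The case $\sigma_y=0$ makes $\ampopt$ undefined, so I will assume $\sigma_y>0$ throughout, as the statement implicitly does. Because the pair is centered, the first-order condition for $\E[(s\yhat-y)^2]$ gives the stated slope $s^{\star}=\Cov(\yhat,y)/\Var(\yhat)$.

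The first step is to substitute this slope into the amplitude. This gives
\[
\mathrm{sd}(s^{\star}\yhat)=\lvert s^{\star}\rvert\,\sigma_{\yhat}=\frac{\lvert\Cov(\yhat,y)\rvert}{\sigma_{\yhat}} .
\]
The second step is to divide by $\sigma_y$:
\[
\ampopt=\frac{\lvert\Cov(\yhat,y)\rvert}{\sigma_{\yhat}\sigma_y}=\lvert\Corr(\yhat,y)\rvert ,
\]
which is \eqref{eq:collapse}. The absolute value matters because $s^{\star}$ may be negative while a standard deviation is always nonnegative.

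For the bound, I invoke \eqref{eq:attenuation}. Under \eqref{eq:decomp} and $\Ft$-measurability of $\yhat$, it gives $\Corr(\yhat,y)=\rho\,\IC_{\max}$ with $\rho=\Corr(\yhat,\mu)$. Cauchy--Schwarz gives $\lvert\rho\rvert\le 1$, and $\IC_{\max}\ge 0$ by definition as a square root. Hence
\[
\ampopt=\lvert\rho\rvert\,\IC_{\max}\le\IC_{\max}.
\]

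The only genuine obstacle is the degenerate case $\Var(\mu)=0$, where $\rho$ is undefined. I would handle it directly rather than through \eqref{eq:attenuation}. If $\Var(\mu)=0$, then $\mu$ is constant and equal to $0$ by centering. So $\Cov(\yhat,y)=\E[\yhat\,\E[\eps\mid\Ft]]=0$ by the tower property and $\Ft$-measurability of $\yhat$. Therefore $\ampopt=0=\IC_{\max}$, and the bound still holds.

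For completeness, I would briefly re-derive \eqref{eq:attenuation} from the same tower-property step. It gives $\Cov(\yhat,y)=\Cov(\yhat,\mu)$, and dividing by $\sigma_{\yhat}\sigma_y$ and using $\Var(\mu)/\Var(y)=R^2_{\max}$ recovers the identity.
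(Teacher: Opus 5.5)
Your proposal is correct and follows the paper's proof essentially step for step: you minimize the quadratic in $s$ to get $s^{\star}=\Cov(\yhat,y)/\Var(\yhat)$, substitute to obtain $\ampopt=\lvert\Corr(\yhat,y)\rvert$, and then apply the attenuation identity $\Corr(\yhat,y)=\rho\,\IC_{\max}$ with $\lvert\rho\rvert\le1$. Your separate handling of the degenerate case $\Var(\mu)=0$ is a small tidy-up the paper leaves implicit: there $\rho$ is undefined, but $\Cov(\yhat,y)=0$ still gives $\ampopt=0=\IC_{\max}$.
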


\begin{proof}[Proof sketch]
Substituting $s^{\star}$ gives
$\mathrm{sd}(s^{\star}\yhat)/\mathrm{sd}(y)
=\lvert\Cov(\yhat,y)\rvert/
(\mathrm{sd}(\yhat)\mathrm{sd}(y))$. Under the additional conditions, the
bound follows from \eqref{eq:attenuation} and $\lvert\rho\rvert\le1$.
\end{proof}

The equality is the ordinary-least-squares slope identity in
standard-deviation units. It applies per timestamp as
$\ampopt_{t}=\lvert\IC_{t}\rvert$. The experiments report two different
descriptive quantities: signed mean $\IC_t$ and the raw amplitude
\begin{equation}
  \ampraw_{t}:=\frac{\mathrm{sd}_{i}(\yhat_{it})}
                         {\mathrm{sd}_{i}(y_{it})}.
  \label{eq:raw-amplitude}
\end{equation}
No identity equates $\ampraw_t$ with $\IC_t$. Squared-error training approaches
the optimal scaling only to the extent that the fitted model approaches its
population optimum.

The distinction also clarifies what the proposition predicts. It constrains the
amplitude after the best scalar recalibration of a given forecast. A model may
output any raw scale before recalibration, including the large values produced by
a pure correlation loss. Comparing raw amplitude with correlation can diagnose
residual scale error, but a ratio of their time averages is not an estimate of
$s_t^\star$: the slope is timestamp-specific and involves a ratio of covariance to
variance. For pretrained models with different objectives, raw amplitude is an
empirical diagnostic rather than the left-hand side of \eqref{eq:collapse}.

Section~\ref{sec:results} tests these empirical implications on Finance1K,
synthetic panels, and public foundation-model benchmarks.

The amplitude argument also extends beyond squared-error training. Of the
nineteen foundation models surveyed in Section~\ref{sec:objectives}, fifteen
disclose a forecasting objective and only four are squared-error-like.

\begin{proposition}[Mean and median point forecasts]
\label{prop:quantile}
Suppose $y\mid\Ft=\mu+\sigma Z$ for a fixed distribution $Z$ symmetric about
zero with finite mean, and a scale $\sigma$ common across series. The
conditional mean and median equal $\mu$, so their optimally scaled forecasts
satisfy \eqref{eq:collapse}.
\end{proposition}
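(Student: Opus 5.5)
The plan is to show that, under the location-scale model, the conditional mean and the conditional median both equal $\mu$ plus a constant, and that this constant is zero by symmetry. Once this is in hand, both point forecasts coincide with $\mu$, which is $\Ft$-measurable. \Cref{prop:collapse} then applies directly to each of them.

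\textbf{Conditional mean.} Because $Z$ has finite mean and is symmetric about zero, $\E[Z]=0$. By linearity,
\[
\E[y\mid\Ft]=\mu+\sigma\,\E[Z]=\mu .
\]
Here I use that $\mu$ and $\sigma$ are $\Ft$-measurable, which is implicit in writing the conditional law as $\mu+\sigma Z$ with $Z$ independent of $\Ft$. I would state this reading explicitly. I would also assume $\sigma>0$; the case $\sigma=0$ is trivial.

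\textbf{Conditional median.} Symmetry gives $P(Z\le0)\ge\tfrac12$ and $P(Z\ge0)\ge\tfrac12$, so $0$ is a median of $Z$. The map $z\mapsto\mu+\sigma z$ is increasing, so $\mu$ is a conditional median of $y$.

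The main obstacle is non-uniqueness of the median when $Z$ puts no mass on an interval around zero. The pinball loss at level $0.5$ is then minimized by any point of an interval. I would handle this by taking the symmetric (midpoint) median, which is $\mu$. Equivalently, I would restrict the claim to the canonical minimizer and remark that the median set is an interval centered at $\mu$.

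\textbf{Conclusion.} Both forecasts equal $\yhat=\mu$. Provided $\Var(\mu)>0$, \Cref{prop:collapse} applies to $\yhat=\mu$ after centering, giving $\ampopt=\lvert\Corr(\mu,y)\rvert$. Since $\rho=1$ in \eqref{eq:attenuation}, this equals $\IC_{\max}$. So the bound of \Cref{prop:collapse} is attained, and it equals $\sqrt{\SNR/(1+\SNR)}$ from \eqref{eq:volpred}.

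The common-$\sigma$ assumption matters only if one wants the median to remain an affine function of $\mu$ across series. With symmetric $Z$ it is not even needed for the identity $\text{median}=\mu$. I would note this as a remark rather than rely on it.
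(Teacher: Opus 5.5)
Your proposal is correct and follows essentially the same route as the paper: symmetry plus a finite mean gives $\E[Z]=0$ and makes $0$ a median of $Z$, the increasing affine map $z\mapsto\mu+\sigma z$ carries both to $\mu$, and \Cref{prop:collapse} is then applied to the pair $(\mu,y)$, with $\rho=1$ giving $\ampopt=\IC_{\max}$ as in the paper's remark following the proof. Your midpoint convention for a non-unique median and your explicit condition $\Var(\mu)>0$ (needed for $\Var(\yhat)>0$ in \Cref{prop:collapse}) are small tightenings that the paper leaves implicit.
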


\begin{proof}[Proof sketch]
Symmetry about zero and a finite mean give $\E[Z]=0$, and zero is a median of
$Z$. Both forecasts are therefore $\mu$, and \Cref{prop:collapse} applies to the
pair $(\mu,y)$. \Cref{sec:formal} gives the details.
\end{proof}

The result concerns a point forecast. A well-specified predictive distribution
can retain the target's conditional spread even when its mean or median is
small. Series-varying scale and off-median quantiles require separate treatment,
given in Section~\ref{sec:formal}.

\paragraph{The two conditions are distinct.}
Low predictability is the condition for amplitude attenuation, rather than a
complete explanation of forecast collapse. A low-signal univariate target can
produce a small but well-calibrated point forecast without any
cross-sectional question. Conversely, a highly predictable multivariate target
can have well-scaled marginal forecasts while its joint ordering is wrong. The
pattern in Figure~1(a) combines the two: weak predictable variance
limits the calibrated amplitude, and per-series scoring leaves the
cross-series relationship underdetermined.

\subsection{Per-series risk leaves coupling unidentified}

Amplitude is one part of collapse. Cross-sectional ordering depends on the
joint behavior of all series at one timestamp.

\begin{proposition}[Insensitivity to cross-series coupling]
\label{prop:separable}
Let $R$ be pointwise. Then $R$ is a functional of the per-coordinate joint laws
$\{\mathrm{Law}(\yhat_{it},y_{it})\}$ alone. It is unchanged when those pairs
are recoupled across $i$ without changing any per-coordinate law, whereas
$\IC_t$ can change. Hence equal-risk forecasters can have different
cross-sectional correlation.
\end{proposition}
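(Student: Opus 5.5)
The plan has two parts. First, show that $R$ depends only on the per-coordinate laws, which is a matter of reading off the definition. Second, build an explicit recoupling that leaves every per-coordinate law fixed but changes $\IC_t$.

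For the first part, write
\[
R(\yhat)=\sum_{i,t}\E[\ell(\yhat_{it},y_{it})].
\]
Each summand $\E[\ell(\yhat_{it},y_{it})]=\int \ell\,d\,\mathrm{Law}(\yhat_{it},y_{it})$ is determined by the joint law of the pair $(\yhat_{it},y_{it})$ alone, assuming the expectations exist. Summing over $(i,t)$ shows that $R$ is a functional of the collection $\{\mathrm{Law}(\yhat_{it},y_{it})\}$. Now take any other forecast-target system $(\yhat',y')$ on a possibly different probability space, with $\mathrm{Law}(\yhat'_{it},y'_{it})=\mathrm{Law}(\yhat_{it},y_{it})$ for every $(i,t)$. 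Such a system is what "recoupled across $i$" means. It has the same summands, hence the same risk. No property of $\ell$ beyond integrability is needed.

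For the second part, a single two-series counterexample at one timestamp suffices, since $\IC_t$ is a within-timestamp correlation across $i$. I would fix the target cross-section $\mathbf{y}_t$, with a nondegenerate cross-sectional spread, and arrange for each per-coordinate pair law to be a symmetric mixture. The simplest version uses an exchangeable panel in which each $y_{it}$ is a fair $\pm1$ sign and each $\yhat_{it}$ is also a fair $\pm1$ sign.

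In coupling A, set $\yhat_{it}=y_{it}$ with probability $1/2$ and $\yhat_{it}=-y_{it}$ with probability $1/2$, with the choice governed by a \emph{common} coin across $i$. In coupling B, use the same marginal pair laws but draw the coin independently per series. The per-coordinate law of $(\yhat_{it},y_{it})$ is identical in both couplings, so $R$ agrees by the first part. However, in coupling A the realized cross-sectional correlation equals $\pm1$, while in coupling B it is typically near $0$.

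To make the difference hold even for $\E[\IC_t]$, rather than only for its realized value, I would switch to a cleaner pair. Keep $\yhat_{it}=y_{it}$ in coupling A. In coupling B, assign $\yhat_{it}$ by a permutation: for $N=2$, set $\yhat_{1t}=y_{2t}$ and $\yhat_{2t}=y_{1t}$, with $(y_{1t},y_{2t})$ exchangeable. This changes each pair law, so instead I would keep the pair law fixed via a cyclic-shift construction on an i.i.d. panel, which is the step needing care.

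The main obstacle is exactly this: producing a recoupling that preserves every pair law $\mathrm{Law}(\yhat_{it},y_{it})$ yet changes the joint cross-series structure. Pair laws pin down the within-pair dependence, so the freedom lies only in how different pairs depend on each other.

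The first construction to try is the following. Let $(\yhat_{it},y_{it})$ be pairs that are independent across $i$ in coupling B, and comonotone across $i$ through a shared latent variable in coupling A. Take $y_{it}=a_t+\eps_{it}$ and $\yhat_{it}=b_{it}+\eps_{it}$ with the same pair law in both couplings, but with the idiosyncratic versus common components distributed differently, for example a common shock in one coupling and an independent shock in the other. In that setup, the cross-sectional demeaning inside $\IC_t$ removes common components, so the two couplings give different within-timestamp correlations.

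If that fails, the fallback is the common-coin example above. There, $\E[\lvert\IC_t\rvert]$ or the distribution of $\IC_t$ differs between the couplings, which already shows that $\IC_t$ is not determined by the pair laws. Equal-risk forecasters with different cross-sectional correlation then follow immediately.
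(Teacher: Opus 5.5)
Your first part matches the paper's Steps~1--2 and is fine. The gap is in the second part, which carries the substance of the proposition and which you flag as ``the main obstacle'' without resolving it.

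The only construction you fully specify, the common coin, changes the law of $\IC_t$ but not its signed mean. Negating the shared coin in coupling A, or all coins jointly in coupling B, preserves the law of the panel and negates $\IC_t$. Hence $\E[\IC_t]=0$ under both couplings. This shows that $\lvert\IC_t\rvert$ is not a function of the pair laws, which meets the literal wording. It does not exhibit two equal-risk forecasters one of which ranks better on average, and that is the sense in which the paper uses the result: its example moves $\E[\IC_t]$ from $0.371$ to $0.032$, and the follow-up remark is that pointwise risk cannot select the coupling with better $\IC$. The $\pm1$ version also gives positive probability to degenerate cross-sections where $\IC_t$ is undefined. In addition, ``fix the target cross-section'' is inconsistent with each $y_{it}$ being a fair sign. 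Your other two constructions are only announced, and you never verify that they preserve the pair laws, which is the crux.

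Both are one step from working.

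\emph{Permutation idea.} It failed only because you compared it with $\yhat=y$; compare it with a forecaster drawn independently of the target instead. Take $y_{1t},y_{2t}$ i.i.d.\ with a continuous symmetric law, and set $\yhat_{1t}=-y_{2t}$, $\yhat_{2t}=-y_{1t}$. Each pair $(\yhat_{it},y_{it})$ then has the product law, exactly as for an independent copy, so every pointwise risk agrees. Yet $\yhat_{1t}-\yhat_{2t}=y_{1t}-y_{2t}$, which gives $\IC_t=1$ almost surely against $\E[\IC_t]=0$.

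\emph{Common-shock idea.} It works once the shock is independent of everything else. Let $y_{it}=a_{it}+\eps_{it}$ and $\yhat_{it}=b_{it}+\eps_{it}$, with independent centered Gaussian components of variances $\alpha^2,\beta^2,1$. Taking $a_{it}=a_t$ common versus i.i.d.\ across $i$ leaves each pair law unchanged. However, $\Corr(\yhat_{1t}-\yhat_{2t},\,y_{1t}-y_{2t})$ equals $(1+\beta^2)^{-1/2}$ in the first case and $\big((1+\beta^2)(1+\alpha^2)\big)^{-1/2}$ in the second. The arcsine law turns this into different values of $\E[\IC_t]$.

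\emph{The paper's route.} It uses unit-variance Gaussian pairs with correlation $r$, zero cross-correlation among forecasts and among targets, and a single free parameter $g=\Cov(\yhat_1,y_2)=\Cov(\yhat_2,y_1)$. This gives constant risk $2(1-r)$ and $\E[\IC_t]=\tfrac{2}{\pi}\arcsin(r-g)$. That family makes explicit that the lever is the cross-covariance $\Cov(\yhat_{it},y_{jt})$, $i\neq j$, which no pair law records.
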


\begin{proof}[Proof sketch]
Every summand of $R(\yhat)=\sum_{i,t}\E[\ell(\yhat_{it},y_{it})]$ is an
expectation of a function of the single pair $(\yhat_{it},y_{it})$, so $R$ only reads
the per-coordinate laws. Replacing the panel's joint law by
another one under which every pair keeps its law therefore leaves $R$ fixed.
Cross-sectional correlation is computed from the two vectors
$(\yhat_{it})_{i}$ and $(y_{it})_{i}$ at one timestamp, so it depends on
exactly the coupling that such a replacement is free to move.
\Cref{ex:coupling} carries out the replacement explicitly.
\end{proof}

A two-series Gaussian construction holds squared-error risk at $1.40$ while
changing $\E[\IC_t]$ from $0.371$ to $0.032$; the explicit covariance
construction and admissibility conditions appear in Section~\ref{sec:formal}.
This is an invariance result.
With sufficient capacity, the population squared-error minimizer
$\yhat=\mu$ can order the cross-section well. The proposition says that
per-series risk alone does not identify the coupling, which motivates measuring
and rewarding the cross-sectional functional used downstream.

The language of coupling is useful because marginal accuracy does not determine
a multivariate forecast. By Sklar's theorem \citep{sklar1959fonctions}, a joint
law separates into marginal distributions and a copula carrying dependence.
Pointwise evaluation probes the per-series forecast-target laws; it does not
fully determine the dependence among forecast-target pairs across series.
\method does not estimate a copula. It rewards one observable functional of that
dependence, the per-timestamp cross-sectional correlation used by the downstream
decision. Models that require calibrated scenarios or tail dependence need a
richer joint objective.

\paragraph{Implication for optimization.}
The population MSE solution $\E[y\mid\Ft]$ can rank the cross-section as well as
the information set allows. The invariance result applies when optimization
selects among finite-model solutions with similar marginal risk. Optimization,
regularization, pretraining, and shared representations choose among those
solutions without a guarantee on the downstream functional. A cross-sectional loss
supplies that preference explicitly. The empirical comparisons show that it
improves ordering under the matched protocol studied here.

\FloatBarrier
\section{Formal Foundations and Proofs}
\label{sec:formal}

This section carries the assumptions, the full proofs of the results
stated in \Cref{sec:theory}, and the explicit construction behind
\Cref{prop:separable}. The classical direction law closes the section as an
illustrative consequence. Only the assumptions used throughout are collected
here; one needed by a single result is stated in the subsection that proves it,
so each result carries its own hypotheses where they are used.

\subsection{Assumptions}

\begin{paperassumptions}
\begin{itemize}[leftmargin=1.6em,itemsep=1pt,topsep=2pt]
  \refstepcounter{assumption}\item[\textbf{\theassumption}]\label{a:additive}%
    \emph{Conditional-mean decomposition:}
    $\mu=\E[y\mid\Ft]$ and $\eps=y-\mu$, so $\E[\eps\mid\Ft]=0$.
  \refstepcounter{assumption}\item[\textbf{\theassumption}]\label{a:noleak}%
    \emph{Admissible forecast:} $\yhat$ is $\Ft$-measurable. Therefore
    $\delta:=\yhat-\mu$ is also $\Ft$-measurable.
  \refstepcounter{assumption}\item[\textbf{\theassumption}]\label{a:weak}%
    \emph{Weak signal:} $\SNR\ll1$, which holds for hourly returns and not for
    volume.
\end{itemize}
\end{paperassumptions}

Every claim below is also checked on a synthetic panel that satisfies
\eqref{eq:decomp} directly by drawing $\mu$, $\eps$, and $\delta$ with prescribed
variances, so the quantities are exact ground truth rather than estimates and no
market data enters.

\subsection{Additivity of squared error}

\begin{proposition}[Additive decomposition]
\label{prop:mse}
Under \ref{a:additive} and \ref{a:noleak}, $\MSE=\E[(\yhat-y)^{2}]=Q+\sigma^{2}$
with $Q=\E[\delta^{2}]$ and $\sigma^{2}=\E[\eps^{2}]$.
\end{proposition}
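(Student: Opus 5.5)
The plan is to expand the squared error around the conditional mean and kill the cross term by orthogonality. Write $\yhat-y=(\yhat-\mu)-(y-\mu)=\delta-\eps$, using the definitions in \ref{a:additive} and \ref{a:noleak}. Expanding the square gives
\begin{equation*}
  \E[(\yhat-y)^{2}]=\E[\delta^{2}]-2\,\E[\delta\eps]+\E[\eps^{2}]
  =Q-2\,\E[\delta\eps]+\sigma^{2},
\end{equation*}
so the whole statement reduces to showing $\E[\delta\eps]=0$.

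For the cross term I would condition on $\Ft$ via the tower property. By \ref{a:noleak}, $\delta$ is $\Ft$-measurable; $\yhat$ is admissible by assumption, and $\mu=\E[y\mid\Ft]$ is measurable by construction. I can therefore pull $\delta$ out of the inner expectation:
\begin{equation*}
  \E[\delta\eps]=\E\bigl[\E[\delta\eps\mid\Ft]\bigr]=\E\bigl[\delta\,\E[\eps\mid\Ft]\bigr]=0,
\end{equation*}
where the last equality uses $\E[\eps\mid\Ft]=0$ from \ref{a:additive}. This is the same orthogonality already invoked for the attenuation identity \eqref{eq:attenuation}.

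The only real obstacle is integrability: the expansion and the "taking out what is known" step need the relevant expectations to be finite. I would handle this by assuming $\E[y^{2}]<\infty$ and $\E[\yhat^{2}]<\infty$, which is implicit whenever $\MSE$ is finite. Under this assumption, $\mu$ is in $L^{2}$ by conditional Jensen, and hence so are $\delta$ and $\eps$. Then $\delta\eps$ is integrable by Cauchy--Schwarz, and the conditional-expectation step is justified. If $\MSE$ is infinite, the identity holds trivially in the extended sense, provided one checks that $Q+\sigma^{2}$ is then also infinite. I would note this case briefly rather than treat it at length. The per-series, per-timestamp version then follows by summing, as in the pointwise risk of \Cref{sec:theory}.
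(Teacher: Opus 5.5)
Your proof is correct and follows the paper's proof step for step: write $\yhat-y=\delta-\eps$, expand the square, and remove $\E[\delta\eps]$ by taking the $\Ft$-measurable $\delta$ out of $\E[\,\cdot\mid\Ft]$. The only slip is in your integrability aside, which the paper does not address: finite $\MSE$ does not imply $\E[y^{2}],\E[\yhat^{2}]<\infty$ (take $\yhat=\mu$ with $\E[\mu^{2}]=\infty$ but $\E[\eps^{2}]<\infty$), so assume $Q,\sigma^{2}<\infty$ directly, which is all Cauchy--Schwarz needs for $\delta\eps\in L^{1}$; alternatively, conditioning first gives $\E[(\delta-\eps)^{2}\mid\Ft]=\delta^{2}+\E[\eps^{2}\mid\Ft]$ in $[0,\infty]$ and hence the identity with no extra moment assumption.
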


\begin{proof}
\emph{Step 1, the residual carries no signal.} By \ref{a:additive} the target
splits as $y=\mu+\eps$, and measuring the forecast against the same conditional
mean, $\delta:=\yhat-\mu$, writes it as $\yhat=\mu+\delta$. Subtracting, $\mu$
appears in both and cancels:
\begin{equation}
  \yhat-y=(\mu+\delta)-(\mu+\eps)=\delta-\eps .
  \label{eq:resid}
\end{equation}
This is the point of the decomposition. Whatever the forecast knows about the
conditional mean has already been spent; what is left is the model's own error
$\delta$ set against the unpredictable part $\eps$.

\emph{Step 2, square and expand.} Squaring \eqref{eq:resid} and taking
expectations gives three terms:
\begin{equation}
\begin{aligned}
  \MSE
  &=\E\big[(\delta-\eps)^{2}\big] \\
  &=\E[\delta^{2}]-2\,\E[\delta\eps]+\E[\eps^{2}] \\
  &=\E[\delta^{2}]-2\,\Cov(\delta,\eps)+\E[\eps^{2}] .
\end{aligned}
\label{eq:mse-decomposition}
\end{equation}
The last line replaces the raw cross moment by a covariance, which is legitimate
because $\E[\eps]=\E\big[\E[\eps\mid\Ft]\big]=0$ by the tower property, so the
two agree.

\emph{Step 3, the cross term vanishes.} Here \ref{a:noleak} does the work.
Because $\delta$ is $\Ft$-measurable it passes out of the inner conditional
expectation as a constant:
\begin{equation}
\begin{aligned}
  \E[\delta\eps]
  =\E\!\left[\delta\,\E[\eps\mid\Ft]\right]=0.
\end{aligned}
\label{eq:orthogonality-cross-term}
\end{equation}
A forecast that used information outside $\Ft$ would break exactly here:
$\delta$ could then correlate with $\eps$, and $\MSE$ would pick up a cross term
of either sign.

\emph{Step 4, collect.} With the cross term gone,
$\MSE=\E[\delta^{2}]+\E[\eps^{2}]=Q+\sigma^{2}$. Both parts are nonnegative and
only $Q$ responds to the forecast, so $\sigma^{2}$ is a floor no admissible
forecast can get under. The decomposition is an exact identity, not an
approximation, under the declared information set.
\end{proof}

This is the classical reducible and irreducible split of the bias-variance
decomposition \citep{geman1992biasvariance,hastie2009elements}. Root mean
squared error is a monotone transform and ranks identically.

The next result needs one further condition on the cross-section, used nowhere
else.

\begin{paperassumptions}[title={Assumption for \protect\Cref*{cor:mseranksvol}}]
\begin{itemize}[leftmargin=1.6em,itemsep=1pt,topsep=2pt]
  \refstepcounter{assumption}\item[\textbf{\theassumption}]\label{a:hetero}%
    \emph{Heteroscedastic dispersion:} $\Var_{i}(\sigma_{i}^{2})\gg
    \Var_{i}(Q_{i})$ \citep{bollerslev1986generalized}.
\end{itemize}
\end{paperassumptions}

\begin{corollary}[Squared-error rankings can be dominated by irreducible noise]
\label{cor:mseranksvol}
Under \ref{a:hetero}, if $\Var_{i}(Q_{i})\ll\Var_{i}(\sigma_{i}^{2})$ across
series, then $\Corr_{i}(\MSE_{i},\sigma_{i}^{2})\to1$.
\end{corollary}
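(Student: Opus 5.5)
Apply \Cref{prop:mse} series by series, so that $\MSE_i=Q_i+\sigma_i^{2}$ for each $i$. The cross-sectional correlation with $\sigma_i^2$ is then the correlation between a sum and one of its summands, which reduces to elementary variance algebra. Writing $V_Q=\Var_i(Q_i)$, $V_\sigma=\Var_i(\sigma_i^2)$ and $C=\Cov_i(Q_i,\sigma_i^2)$, bilinearity gives
\begin{equation*}
  \Corr_i(\MSE_i,\sigma_i^2)
  =\frac{V_\sigma+C}{\sqrt{V_\sigma\,(V_\sigma+V_Q+2C)}} .
\end{equation*}

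Next, control the cross term with Cauchy--Schwarz, $\lvert C\rvert\le\sqrt{V_QV_\sigma}$. Set $r=\sqrt{V_Q/V_\sigma}$, which is small under \ref{a:hetero}. Dividing numerator and denominator by $V_\sigma$ bounds the correlation below by
\begin{equation*}
  \frac{1-r}{\sqrt{1+r^2+2r}}=\frac{1-r}{1+r}
\end{equation*}
whenever $r<1$. It is bounded above by $1$ trivially. Hence $1-\Corr_i(\MSE_i,\sigma_i^2)\le 2r/(1+r)\le 2r$. The limit in the statement, read as $V_Q/V_\sigma\to0$, follows by squeezing.

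The step I expect to need the most care is interpretive rather than computational. The assumption uses the informal symbol $\ll$, and the covariance between $Q_i$ and $\sigma_i^2$ is not assumed to vanish. I would therefore make the limit precise as a sequence of cross-sections with $r\to0$, and I would state the explicit rate $1-2r$. This way no independence of $Q_i$ and $\sigma_i^2$ is needed.

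I would also note that \ref{a:additive}--\ref{a:noleak} must hold for each series for the per-series additive decomposition to apply. Finally, I would record the degenerate case $V_\sigma=0$, which the assumption excludes.
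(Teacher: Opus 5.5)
Your proposal is correct and is essentially the paper's proof. Both arguments use the per-series split $\MSE_i=Q_i+\sigma_i^2$ from \Cref{prop:mse}, bilinearity of the covariance, and Cauchy--Schwarz with $r=\sqrt{v}$; your explicit bound $(1-r)/(1+r)$ just makes concrete the paper's $(1+O(\sqrt{v}))/\sqrt{1+O(\sqrt{v})+v}$.

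Neither rate is sharp, though. The map $c\mapsto(1+c)/\sqrt{1+r^{2}+2c}$ has derivative of the same sign as $c+r^{2}$, so its minimum over $\lvert c\rvert\le r$ is $\sqrt{1-r^{2}}$. Hence $1-\Corr_i(\MSE_i,\sigma_i^2)\le 1-\sqrt{1-v}\le v$, and the gap actually closes at rate $O(v)$, not the $O(\sqrt{v})$ that the paper's Step~5 asserts cannot be improved.
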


\begin{proof}
\emph{Step 1.} By \Cref{prop:mse} each series satisfies
$\MSE_{i}=Q_{i}+\sigma_{i}^{2}$, so the quantity in question is the correlation
of a sum with one of its own two parts. Bilinearity splits the numerator
accordingly:
\begin{equation}
  \Cov_{i}(\MSE_{i},\sigma_{i}^{2})
  =\Cov_{i}(Q_{i},\sigma_{i}^{2})+\Var_{i}(\sigma_{i}^{2}).
  \label{eq:mserank-cov}
\end{equation}
A series is therefore ranked by squared error partly on its model error and
partly on its noise level, and the second term is present no matter how the
model behaves.

\emph{Step 2.} Write
\begin{equation}
  v:=\frac{\Var_{i}(Q_{i})}{\Var_{i}(\sigma_{i}^{2})},
  \label{eq:mserank-v}
\end{equation}
which is well defined because \ref{a:hetero} makes the denominator positive, and
small precisely when \ref{a:hetero} holds. It measures how much the reducible
part varies across series relative to the irreducible part.

\emph{Step 3.} Cauchy-Schwarz applied to the two centered
cross-sectional variables gives
$\lvert\Cov_{i}(Q_{i},\sigma_{i}^{2})\rvert\le
\mathrm{sd}_{i}(Q_{i})\,\mathrm{sd}_{i}(\sigma_{i}^{2})
=\sqrt{v}\,\Var_{i}(\sigma_{i}^{2})$. The covariance is thus $O(\sqrt{v})$ once
measured in units of $\Var_{i}(\sigma_{i}^{2})$, and the same bound controls the
denominator through
$\Var_{i}(\MSE_{i})=\Var_{i}(\sigma_{i}^{2})
\big(1+2\Cov_{i}(Q_{i},\sigma_{i}^{2})/\Var_{i}(\sigma_{i}^{2})+v\big)$.

\emph{Step 4.} Dividing \eqref{eq:mserank-cov} by the two standard
deviations and factoring $\Var_{i}(\sigma_{i}^{2})$ out of both puts every term
on one scale:
\begin{equation}
  \Corr_{i}(\MSE_{i},\sigma_{i}^{2})
  =\frac{\Cov_{i}(Q_{i},\sigma_{i}^{2})+\Var_{i}(\sigma_{i}^{2})}
        {\sqrt{\Var_{i}(Q_{i}+\sigma_{i}^{2})\,\Var_{i}(\sigma_{i}^{2})}}
  =\frac{1+O(\sqrt{v})}{\sqrt{1+O(\sqrt{v})+v}} ,
  \label{eq:mserank-ratio}
\end{equation}
which tends to $1$ as $v\to0$.

\emph{Step 5.} The approach is $O(\sqrt{v})$, not $O(v)$. A small
dispersion in $Q_{i}$ still reaches the numerator through its covariance with
$\sigma_{i}^{2}$, which carries one power of $\mathrm{sd}_{i}(Q_{i})$, whereas
its own variance carries two. The gap therefore closes slowly: driving $v$ down
by a factor of four only halves it, so a squared-error ranking stays close to a
noise ranking well after the model errors have been made comparable.
\end{proof}

Thus heterogeneous noise can dominate a cross-sectional squared-error ranking
of series. This diagnostic follows the tradition of metric-pathology analyses
\citep{hyndman2006measures}.

\subsection{Attenuation and the amplitude identity}

\begin{proposition}[Attenuation]
\label{prop:ic}
Under \ref{a:additive} and \ref{a:noleak},
$\IC=\rho\sqrt{\SNR/(1+\SNR)}$, where
$\SNR=\Var(\mu)/\E[\Var(y\mid\Ft)]$.
\end{proposition}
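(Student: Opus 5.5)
The plan is to compute $\IC=\Corr(\yhat,y)$ directly from the decomposition $y=\mu+\eps$ of \ref{a:additive}. We split it into a covariance step and a variance step, then recombine.

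\emph{Covariance step.} Write
\[
\Cov(\yhat,y)=\Cov(\yhat,\mu)+\Cov(\yhat,\eps).
\]
We kill the second term exactly as in Step~3 of the proof of \Cref{prop:mse}. By the tower property, $\E[\eps]=0$. By \ref{a:noleak}, $\yhat$ is $\Ft$-measurable, so
\[
\E[\yhat\eps]=\E\big[\yhat\,\E[\eps\mid\Ft]\big]=0 .
\]
Hence $\Cov(\yhat,y)=\Cov(\yhat,\mu)=\rho\,\mathrm{sd}(\yhat)\,\mathrm{sd}(\mu)$.

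\emph{Variance step.} Apply the same orthogonality with $\mu$ in place of $\yhat$, since $\mu$ is itself $\Ft$-measurable. This gives $\Cov(\mu,\eps)=0$, so $\Var(y)=\Var(\mu)+\Var(\eps)$. We then identify
\[
\Var(\eps)=\E[\eps^2]=\E\big[\E[\eps^2\mid\Ft]\big]=\E[\Var(y\mid\Ft)],
\]
using $\E[\eps\mid\Ft]=0$. This is the law of total variance already invoked for \eqref{eq:volpred}.

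\emph{Recombination.} Dividing the covariance by $\mathrm{sd}(\yhat)\,\mathrm{sd}(y)$ gives
\[
\IC=\rho\,\frac{\mathrm{sd}(\mu)}{\mathrm{sd}(y)}=\rho\sqrt{\frac{\Var(\mu)}{\Var(\mu)+\E[\Var(y\mid\Ft)]}} .
\]
Dividing numerator and denominator inside the root by $\E[\Var(y\mid\Ft)]$ yields $\rho\sqrt{\SNR/(1+\SNR)}$. This is \eqref{eq:attenuation} combined with \eqref{eq:volpred}.

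The algebra is routine. The only step needing care is the degenerate-case bookkeeping, and I expect this to be the main obstacle.
\begin{itemize}
\item $\rho$ requires $\Var(\yhat)>0$ and $\Var(\mu)>0$.
\item $\SNR$ requires $\E[\Var(y\mid\Ft)]>0$.
\end{itemize}
I would state these as implicit nondegeneracy conditions, together with finite second moments so that all the expectations exist. I would also remark that if $\Var(\mu)=0$, both sides vanish under the convention $\rho\cdot0=0$.
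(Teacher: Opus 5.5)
Your proposal is correct and follows the paper's proof essentially step for step. Orthogonality of $\eps$ to $\Ft$ removes $\Cov(\yhat,\eps)$, the law of total variance (which you derive rather than cite) splits $\Var(y)$, and factoring out $\mathrm{sd}(\mu)$ gives $\rho\,\IC_{\max}$, which is then rewritten in terms of $\SNR$; the explicit nondegeneracy conditions you add ($\Var(\yhat),\Var(\mu),\E[\Var(y\mid\Ft)]>0$, finite second moments) are a sensible detail that the paper leaves implicit.
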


\begin{proof}
\emph{Step 1.} Split $y=\mu+\eps$ and use
bilinearity. The second piece vanishes: $\yhat$ is $\Ft$-measurable by
\ref{a:noleak}, so $\E[\yhat\eps]=\E\big[\yhat\,\E[\eps\mid\Ft]\big]=0$ by
\ref{a:additive}, and $\E[\eps]=0$ by the same assumption, so the covariance is
the product moment. Hence
\begin{equation}
\begin{aligned}
  \Cov(\yhat,y)
  &=\Cov(\yhat,\mu)+\Cov(\yhat,\eps) \\
  &=\Cov(\yhat,\mu).
\end{aligned}
\label{eq:covnum}
\end{equation}
A forecast can only covary with the target through the predictable part. No
amount of skill lets it reach $\eps$.

\emph{Step 2.} The law of total variance gives
$\Var(y)=\Var(\mu)+\E[\Var(y\mid\Ft)]$, so the target's spread is the
predictable spread plus the average conditional spread. Only the first is
available for a forecast to match.

\emph{Step 3.} Dividing \eqref{eq:covnum} by the two standard deviations
and inserting $\mathrm{sd}(\mu)$ top and bottom separates the two effects:
\begin{equation}
\begin{aligned}
  \IC
  &=\frac{\Cov(\yhat,y)}{\mathrm{sd}(\yhat)\,\mathrm{sd}(y)}
   =\frac{\Cov(\yhat,\mu)}{\mathrm{sd}(\yhat)\,\mathrm{sd}(y)} \\
  &=\underbrace{\Corr(\yhat,\mu)}_{\rho}
    \cdot
    \underbrace{\frac{\mathrm{sd}(\mu)}{\mathrm{sd}(y)}}_{\IC_{\max}} .
\end{aligned}
\label{eq:ic-factorization}
\end{equation}
The first factor is forecast skill relative to the conditional mean, and a model
can move it. The second is fixed by the target and the declared information set,
and no model can.

\emph{Step 4.} Substituting Step 2 into
$\IC_{\max}$ and dividing numerator and denominator by $\E[\Var(y\mid\Ft)]$
turns the ceiling into the stated function of the signal-to-noise ratio:
\begin{equation}
  \IC_{\max}
  =\sqrt{\frac{\Var(\mu)}{\Var(\mu)+\E[\Var(y\mid\Ft)]}}
  =\sqrt{\frac{\SNR}{1+\SNR}} ,
  \label{eq:icmax-snr}
\end{equation}
so $\IC=\rho\sqrt{\SNR/(1+\SNR)}$. Under \ref{a:weak} the ceiling behaves like
$\sqrt{\SNR}$, and the square root matters: it leaves the ceiling far above the
signal-to-noise ratio itself, so $\SNR=0.01$ still permits
$\IC_{\max}\approx0.10$.
\end{proof}

The multiplicative factor is the reliability ratio of errors-in-variables
regression \citep{fuller1987measurement,carroll2006measurement}, with roots in
\citet{spearman1904proof}.

The amplitude identity is a statement about slopes through the origin, so it
needs one condition that the results above do not.

\begin{paperassumptions}[title={Assumption for \protect\Cref*{prop:collapse}}]
\begin{itemize}[leftmargin=1.6em,itemsep=1pt,topsep=2pt]
  \refstepcounter{assumption}\item[\textbf{\theassumption}]\label{a:center}%
    \emph{Centering:} variables are centered, so the rescaling is a slope
    through the origin.
\end{itemize}
\end{paperassumptions}

\begin{proof}[Proof of \Cref{prop:collapse}]
\emph{Step 1.} Expand the objective in $s$. Under
\ref{a:center} both variables are centered, so second moments are variances:
\begin{equation}
  \E\big[(s\yhat-y)^{2}\big]
  =s^{2}\Var(\yhat)-2s\Cov(\yhat,y)+\Var(y).
  \label{eq:rescale-risk}
\end{equation}
The right-hand side is a parabola in $s$ whose leading coefficient
$\Var(\yhat)$ is strictly positive, so it is convex and its stationary point is
the unique minimizer. Setting the derivative $2s\Var(\yhat)-2\Cov(\yhat,y)$ to
zero gives
\begin{equation}
  s^{\star}=\frac{\Cov(\yhat,y)}{\Var(\yhat)} .
  \label{eq:sstar}
\end{equation}
This is the least-squares slope of $y$ on $\yhat$, so $s^{\star}$ is the single
best post-hoc rescaling available to any user of the forecast: it is what a
practitioner would recover by regressing realized values on predictions.

\emph{Step 2.} Scaling multiplies the standard
deviation by $\lvert s^{\star}\rvert$, and one factor of $\mathrm{sd}(\yhat)$
then cancels against the variance:
\begin{equation}
\begin{aligned}
  \ampopt
  &=\frac{\mathrm{sd}(s^{\star}\yhat)}{\mathrm{sd}(y)}
   =\frac{\lvert s^{\star}\rvert\,\mathrm{sd}(\yhat)}{\mathrm{sd}(y)} \\
  &=\frac{\lvert\Cov(\yhat,y)\rvert}{\Var(\yhat)}
    \cdot\frac{\mathrm{sd}(\yhat)}{\mathrm{sd}(y)} \\
  &=\frac{\lvert\Cov(\yhat,y)\rvert}{\mathrm{sd}(\yhat)\,\mathrm{sd}(y)}
   =\lvert\Corr(\yhat,y)\rvert .
\end{aligned}
\label{eq:amplitude-proof}
\end{equation}
No model has been used: this holds for any centered pair with $\Var(\yhat)>0$.
The identity is worth stating in words. Once a forecast has been given its best
possible scaling, the amplitude it retains is not a separate property to be
tuned; it is exactly its correlation with the target. A forecast cannot be both
weakly correlated and correctly scaled.

\emph{Step 3.} Here the conditional-mean decomposition enters. By
\Cref{prop:ic} the correlation factors as $\Corr(\yhat,y)=\rho\,\IC_{\max}$, and
$\lvert\rho\rvert\le1$ because $\rho$ is itself a correlation. Substituting into
Step 2,
\begin{equation}
  \ampopt=\lvert\Corr(\yhat,y)\rvert=\lvert\rho\rvert\,\IC_{\max}\le\IC_{\max}.
  \label{eq:collapse-bound}
\end{equation}
The ceiling depends only on the target and the declared information set, so it
binds every admissible forecast at once. Improving the model moves
$\lvert\rho\rvert$ toward one and nothing else.
\end{proof}

\begin{remark}[title=The identity and the bound require different assumptions]
Step 2 is an identity about least squares and holds universally. Step 3 is where
predictability enters through the conditional-mean decomposition. The
best-scaled amplitude identity remains valid for any centered forecast-target
pair, while the ceiling depends on the declared information set.
\end{remark}

\begin{remark}[title=Centering and attribution]
The equality is the classical ordinary-least-squares slope identity written in
standard-deviation units \citep{fuller1987measurement,carroll2006measurement}.
With an intercept, or with variables centered as in \ref{a:center}, a constant
offset in $\delta$ is absorbed. A slope fit through the origin on uncentered
variables has a different minimizer. Thus a best-scaled centered forecast with
weak correlation necessarily has weak amplitude.
\end{remark}

\subsection{Mean and median point forecasts}

This subsection is the only place the conditional law is given a shape.

\begin{paperassumptions}[title={Assumption for \protect\Cref*{prop:quantile}}]
\begin{itemize}[leftmargin=1.6em,itemsep=1pt,topsep=2pt]
  \refstepcounter{assumption}\item[\textbf{\theassumption}]\label{a:symm}%
    \emph{Location family symmetric about zero, with finite mean and a common
    conditional scale.}
\end{itemize}
\end{paperassumptions}

\begin{proof}[Proof of \Cref{prop:quantile}]
\emph{Setup.} Under \ref{a:symm} the conditional law of $y$ given
$\Ft$ is a location-scale family, $y=\mu+\sigma Z$, where $Z$ is symmetric about
zero with finite mean and $\sigma>0$ is the common conditional scale. Both
candidate forecasts are read off this one law.

\emph{Step 1.} Symmetry means $Z$ and $-Z$ have the same law,
so $\E[Z]=\E[-Z]=-\E[Z]$, and the finite mean in \ref{a:symm} makes this
subtraction legitimate, forcing $\E[Z]=0$. Since $\mu$ and $\sigma$ are
$\Ft$-measurable, linearity gives $\E[y\mid\Ft]=\mu+\sigma\E[Z]=\mu$.

\emph{Step 2.} The same symmetry gives
$\Pr(Z\le0)=\Pr(Z\ge0)$, and these two probabilities sum to at least one because
together they cover the line, so each is at least $\tfrac12$. That is the
definition of zero being a median of $Z$. The map $z\mapsto\mu+\sigma z$ is
strictly increasing for $\sigma>0$, and a strictly increasing map carries medians
to medians, so $q_{1/2}(y\mid\Ft)=\mu+\sigma\cdot0=\mu$.

\emph{Step 3.} The two forecasts therefore coincide: both are the
same $\Ft$-measurable variable $\mu$. Nothing distinguishes a mean-optimal from a
median-optimal point forecast here, so choosing absolute error over squared error
cannot escape the conclusion. Applying \Cref{prop:collapse} to the centered pair
$(\mu,y)$ gives $\ampopt=\lvert\Corr(\mu,y)\rvert$.
\end{proof}

\begin{remark}[title=The bound is attained by these forecasts]
The forecast $\yhat=\mu$ has $\rho=\Corr(\mu,\mu)=1$, so \Cref{prop:ic} turns the
inequality of \Cref{prop:collapse} into the equality $\ampopt=\IC_{\max}$. The
best case for a point forecast under \ref{a:symm} is the correlation ceiling
itself, and \ref{a:weak} is what makes that ceiling small. The common
conditional scale in \ref{a:symm} does no work at $\tau=\tfrac12$, since a
series-varying $\sigma_{i}$ leaves both forecasts at $\mu_{i}$; it binds away from
the median.
\end{remark}

\begin{remark}[title=Off-median quantiles under a series-varying scale]
If the conditional scale varies across series, the location-scale form gives
$q_{\tau}(y_{i}\mid\Ft)=\mu_{i}+\sigma_{i}q_{\tau}(Z)$. For $\tau\neq\tfrac12$
the term $\sigma_{i}q_{\tau}(Z)$ is not constant across $i$, so the forecast is a
blend of $\mu_{i}$ and $\sigma_{i}$ and \Cref{prop:quantile} does not apply. This
is the same ingredient as \Cref{cor:mseranksvol}, where cross-sectional variation
in $\sigma_{i}$ drives a ranking. Its effect on ordering depends on the relative
dispersion and dependence of $\mu_i$ and $\sigma_i$.
\end{remark}

\subsection{Insensitivity to cross-series structure}

\begin{proof}[Proof of \Cref{prop:separable}]
\emph{Step 1.} A pointwise risk is a sum of
per-coordinate terms,
\begin{equation}
  R(\yhat)=\sum_{i,t}\E\big[\ell(\yhat_{it},y_{it})\big],
  \label{eq:pointwise-risk}
\end{equation}
and each summand is the expectation of a function of the single pair
$(\yhat_{it},y_{it})$. Whenever it exists it is therefore an integral of $\ell$
against $\mathrm{Law}(\yhat_{it},y_{it})$ and against nothing else. Summing over
$i$ and $t$ adds such terms without introducing any new dependence, so $R$ is a
functional of the collection of pair laws
$\{\mathrm{Law}(\yhat_{it},y_{it})\}$ alone. Squared error, absolute error,
quantile loss and every other per-observation loss enter here on the same
footing; the argument uses only the shape of \eqref{eq:pointwise-risk}.

\emph{Step 2.} Call a panel law $P'$ a \emph{recoupling} of $P$
when every pair $(\yhat_{it},y_{it})$ has the same law under both. Step 1 then
gives $R(P)=R(P')$ for any recoupling, because $R$ never queries the joint
behavior of two distinct coordinates. The recouplings of a given $P$ are exactly
the joint laws having those pair laws as prescribed margins. That set collapses
to a single point only in degenerate cases; in general it is large, and
\Cref{ex:coupling} constructs a one-parameter family of its members explicitly.
Risk is thus constant on a set of panel laws that is typically infinite.

\emph{Step 3.} Cross-sectional correlation is computed
from the two vectors $(\yhat_{it})_{i}$ and $(y_{it})_{i}$ observed at a single
timestamp, so its law depends on the joint distribution across $i$. In
particular it depends on the cross terms $\Cov(\yhat_{it},y_{jt})$ for $i\neq j$,
which describe how one series' forecast relates to another series' outcome, and
which no pair law records. Those are precisely the quantities a recoupling is
free to move while Step 2 holds $R$ fixed.

\emph{Step 4.} Combining the two, a recoupling can change $\IC_{t}$
while leaving $R$ unchanged, so equal risk does not determine cross-sectional
correlation and two forecasters with identical pointwise risk can rank the
cross-section differently. \Cref{ex:coupling} exhibits this concretely: the risk
stays at $1.40$ exactly while $\E[\IC_{t}]$ ranges over an order of magnitude.
\end{proof}

\begin{example}[Fixed risk, varying cross-sectional correlation]
\label{ex:coupling}
Take $N=2$ and let $(\yhat_{1},y_{1},\yhat_{2},y_{2})$ be centered joint Gaussian
with unit variances, $\Corr(\yhat_{i},y_{i})=r$ for both coordinates,
$\Corr(\yhat_{1},\yhat_{2})=\Corr(y_{1},y_{2})=0$, and one free cross parameter
$g=\Cov(\yhat_{1},y_{2})=\Cov(\yhat_{2},y_{1})$.

Each coordinate's joint law is $(\yhat_{i},y_{i})$ bivariate normal with
correlation $r$, and is therefore the same for every admissible $g$. Squared-error
risk depends only on that law, since
$\E[(\yhat_{i}-y_{i})^{2}]=2(1-r)$, so it is constant in $g$.

The cross-sectional correlation is not. With two series it is the sign of
$(\yhat_{1}-\yhat_{2})(y_{1}-y_{2})$, and
\begin{equation}
  \Corr(\yhat_{1}-\yhat_{2},\,y_{1}-y_{2})
  =\frac{2r-2g}{2}=r-g ,
  \label{eq:coupling-correlation}
\end{equation}
so $\E[\IC_{t}]$ moves with $g$ while the risk does not. At $r=0.3$, sweeping
$g$ over $\{-0.25,\,0,\,0.25\}$ leaves the squared-error risk at $2(1-r)=1.40$
exactly while $\E[\IC_{t}]=\tfrac{2}{\pi}\arcsin(r-g)$ takes the values $0.371$,
$0.194$, and $0.032$: an order of magnitude, at identical loss.
\end{example}

\begin{remark}[title=Interpretation]
With enough capacity, the population MSE minimizer is $\yhat=\mu$ and can order
the cross-section as well as the information set allows. The proposition
establishes invariance across equal-risk couplings: pointwise risk alone does
not select the one with better $\IC$. A cross-sectional term resolves that
indifference for its chosen functional; alternative objectives may do so with a
different number or form of terms.
\end{remark}

\subsection{The direction law}

The Gaussian direction law translates a correlation into the hit rate often
reported in financial forecasting.

\begin{lemma}[Gaussian orthant probability]
\label{lem:orthant}
For a standard bivariate normal pair $(U,V)$ with correlation $r$,
$\Pr(U>0,V>0)=\tfrac14+\tfrac{1}{2\pi}\arcsin r$.
\end{lemma}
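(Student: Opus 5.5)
I would prove \Cref{lem:orthant} by reducing the bivariate pair to a rotation-invariant standard Gaussian in the plane, where orthant probabilities become angle measurements. The first step is to write $U=X$ and $V=rX+\sqrt{1-r^{2}}\,Y$ with $X,Y$ independent standard normals. This representation has the right covariance, so it has the law of $(U,V)$ because Gaussian laws are determined by their first two moments. I would treat the degenerate cases $r=\pm1$ separately and directly. At $r=1$ we have $V=U$ and the probability is $\tfrac12=\tfrac14+\tfrac{1}{2\pi}\cdot\tfrac{\pi}{2}$. At $r=-1$ we have $V=-U$ and the probability is $0$. From here on I assume $|r|<1$.

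Next I would set $\phi:=\arcsin r\in(-\pi/2,\pi/2)$, so that $r=\sin\phi$ and $\sqrt{1-r^{2}}=\cos\phi$. Then $V=X\sin\phi+Y\cos\phi=\langle(X,Y),(\sin\phi,\cos\phi)\rangle$ and $U=\langle(X,Y),(1,0)\rangle$. The event $\{U>0,V>0\}$ is therefore the set of points of the standard planar Gaussian lying in the intersection of two open half-planes. Their inward normals $(1,0)$ and $(\sin\phi,\cos\phi)$ make angle $\theta=\arccos(\sin\phi)=\pi/2-\phi$.

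The key fact is that the standard planar Gaussian is rotation invariant. In polar coordinates its angle is uniform on $[0,2\pi)$ and independent of the radius. Consequently the probability of any wedge with vertex at the origin equals its opening angle divided by $2\pi$. Two half-planes through the origin whose normals meet at angle $\theta\in(0,\pi)$ intersect in a wedge of opening $\pi-\theta$. This follows because each boundary line is perpendicular to its normal, so the allowed directions are those within $\pi/2$ of both normals. Substituting $\theta=\pi/2-\phi$ gives opening $\pi/2+\phi$, and therefore
\begin{equation*}
\Pr(U>0,V>0)=\frac{\pi/2+\arcsin r}{2\pi}=\frac14+\frac{1}{2\pi}\arcsin r .
\end{equation*}
Boundary lines carry probability zero, so strict and weak inequalities give the same answer.

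The main obstacle is stating the wedge-angle computation cleanly, in particular the claim that the wedge opening equals $\pi$ minus the angle between the normals. I would verify it by parametrising directions as $(\cos\alpha,\sin\alpha)$. The condition $U>0$ means $\alpha\in(-\pi/2,\pi/2)$. The condition $V>0$ means $\sin(\alpha+\phi)>0$, that is $\alpha\in(-\phi,\pi-\phi)$. The intersection is $(-\phi,\pi/2)$, whose length is $\pi/2+\phi$, and this confirms the formula directly without the normal-angle argument.

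As a sanity check, $r=0$ gives $\tfrac14$, which matches independence. The endpoints $r=\pm1$ agree with the limits computed above.
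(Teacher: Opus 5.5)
Your proof is correct and follows the paper's argument: the same representation $V=rU+\sqrt{1-r^{2}}\,W$, the rotational invariance of the planar standard Gaussian, and the wedge of opening $\pi-\arccos r=\tfrac{\pi}{2}+\arcsin r$. Your separate treatment of $r=\pm1$ and your polar-angle check of the wedge are small tightenings of that same route.
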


This is Sheppard's orthant identity \citep{sheppard1899error}; see also
\citet{kruskal1958ordinal}. His rotational-invariance argument runs as follows.

\begin{proof}[Proof of \Cref{lem:orthant}]
\emph{Step 1.} Write
$V=rU+\sqrt{1-r^{2}}\,W$ with $(U,W)$ a pair of independent standard normals.
This reproduces the required moments, since $\Var(V)=r^{2}+(1-r^{2})=1$ and
$\Cov(U,V)=r\Var(U)=r$, so $(U,V)$ has the law in the statement and the
probability may be computed in the $(U,W)$ plane instead.

\emph{Step 2.} The law of $(U,W)$ is a standard
bivariate normal with independent coordinates, whose density depends only on
$\lVert(u,w)\rVert$. It is therefore rotationally invariant, and the probability
of any cone with apex at the origin is its opening angle divided by $2\pi$. The
problem reduces to measuring one angle.

\emph{Step 3.} In the $(U,W)$ plane the event
$\{U>0,\,V>0\}$ is the intersection of two half-planes through the origin, with
inward normals $n_{1}=(1,0)$ and $n_{2}=(r,\sqrt{1-r^{2}})$. Two half-planes
whose normals meet at angle $\theta$ cut out a wedge of angle $\pi-\theta$, and
here $\cos\theta=n_{1}\cdot n_{2}=r$, so $\theta=\arccos r$. Hence
\begin{equation}
  \Pr(U>0,V>0)
  =\frac{\pi-\arccos r}{2\pi}
  =\frac14+\frac{1}{2\pi}\arcsin r ,
  \label{eq:orthant-angle}
\end{equation}
using $\arccos r=\tfrac{\pi}{2}-\arcsin r$.

\emph{Step 4.} Three reference values confirm the sign
convention: $r=0$ returns the quadrant probability $\tfrac14$, $r=1$ returns
$\tfrac12$ because the two events coincide, and $r=-1$ returns $0$ because they
are disjoint.
\end{proof}

Turning the lemma into a statement about a forecast and its target requires
distributional structure that nothing earlier in the section assumed.

\begin{paperassumptions}[title={Assumption for \protect\Cref*{prop:arcsin}}]
\begin{itemize}[leftmargin=1.6em,itemsep=1pt,topsep=2pt]
  \refstepcounter{assumption}\item[\textbf{\theassumption}]\label{a:gauss}%
    \emph{Joint Gaussianity} of the forecast and the target, used only for the
    exact direction law.
\end{itemize}
\end{paperassumptions}

\begin{proposition}[Direction law]
\label{prop:arcsin}
Under \ref{a:gauss} with zero means,
$\Pr(\mathrm{sign}\,\yhat=\mathrm{sign}\,y)=\tfrac12+\tfrac{1}{\pi}\arcsin(\IC)$.
\end{proposition}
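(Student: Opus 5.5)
The plan is to reduce the direction law to \Cref{lem:orthant} by standardizing and then using symmetry of the centered Gaussian pair. Under \ref{a:gauss} with zero means, set $U:=\yhat/\mathrm{sd}(\yhat)$ and $V:=y/\mathrm{sd}(y)$, assuming both variances are positive (the degenerate case makes a sign constant and is excluded, as in \Cref{prop:collapse}). Rescaling by positive constants changes neither signs nor correlation, so $(U,V)$ is a standard bivariate normal pair with correlation $r=\Corr(\yhat,y)=\IC$, and $\mathrm{sign}\,\yhat=\mathrm{sign}\,U$, $\mathrm{sign}\,y=\mathrm{sign}\,V$.

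Next I will split the event of sign agreement into the two same-sign orthants. The boundary events $\{U=0\}$ and $\{V=0\}$ are null because the marginals are nondegenerate normals, so up to a null set
\begin{equation*}
  \{\mathrm{sign}\,U=\mathrm{sign}\,V\}=\{U>0,V>0\}\cup\{U<0,V<0\},
\end{equation*}
and the union is disjoint. The centered Gaussian law is invariant under $(U,V)\mapsto(-U,-V)$, since that map preserves means and the covariance matrix. Therefore the two orthant probabilities coincide, and the probability of agreement is $2\Pr(U>0,V>0)$.

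Finally, I apply \Cref{lem:orthant} to get $2\bigl(\tfrac14+\tfrac{1}{2\pi}\arcsin r\bigr)=\tfrac12+\tfrac1\pi\arcsin(\IC)$. As a sanity check, $\IC=0$ gives $\tfrac12$ and $\IC=1$ gives $1$.

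There is no real obstacle, since the substantive work is done in the lemma. The step needing the most care is justifying that the boundary events are null and that the reflection symmetry holds; both rely on the zero-mean, nondegenerate Gaussian assumption. I would also remark that combining this result with \Cref{prop:ic} expresses the hit rate through $\rho\,\IC_{\max}$.
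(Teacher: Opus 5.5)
Your proof is correct and follows the paper's argument essentially step for step. Like the paper, you standardize to a standard bivariate normal pair with correlation $\IC$, split sign agreement into the two same-sign orthants, use the central symmetry $z\mapsto -z$ to equate their probabilities, and then invoke \Cref{lem:orthant}. Your reason for why ties are null, via the nondegenerate marginals, is slightly more robust than the paper's appeal to a joint density, since it also covers the case $\lvert\IC\rvert=1$ where no joint density exists.
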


\begin{proof}
\emph{Step 1.} Dividing by a positive constant leaves signs alone, so
$\mathrm{sign}\,\yhat=\mathrm{sign}\,(\yhat/\mathrm{sd}(\yhat))$ and likewise for
$y$. Replace the pair by $(\yhat/\mathrm{sd}(\yhat),\,y/\mathrm{sd}(y))$, which
under \ref{a:gauss} with zero means is a standard bivariate normal pair whose
correlation is $\IC$. Nothing about the event has changed.

\emph{Step 2.} Signs agree exactly on
$\{\yhat>0,y>0\}\cup\{\yhat<0,y<0\}$, a disjoint union. The ties
$\{\yhat=0\}\cup\{y=0\}$ have probability zero, since each is a line in a plane
carrying a density, so they may be ignored.

\emph{Step 3.} A centered Gaussian law is invariant under
$z\mapsto-z$, which maps the second event onto the first, so the two carry equal
probability and the total is twice either one. \Cref{lem:orthant} with $r=\IC$
then evaluates it:
\begin{equation}
  \Pr(\mathrm{sign}\,\yhat=\mathrm{sign}\,y)
  =2\Pr(\yhat>0,y>0)
  =\frac12+\frac{1}{\pi}\arcsin\IC .
  \label{eq:direction-law}
\end{equation}
\end{proof}

At $\IC=0.05$ this is a hit rate of $51.6\%$, which is the sense in which a
correlation that looks negligible is still economically meaningful, and equally
the sense in which a hit rate near one half is consistent with real skill.

\FloatBarrier
\section{Calibrated Cross-Sectional Ranking}
\label{sec:method}

Proposition~\ref{prop:separable} motivates scoring the cross-sectional
functional directly while retaining a calibration anchor. Our version,
\method, combines squared error with differentiable per-timestamp Pearson
correlation:
\begin{equation}
  \mathcal{L}
  =\frac{1}{M}\sum_{i,t}(\yhat_{it}-y_{it})^{2}
   -\frac{\lambda}{|B|}\sum_{t\in B}\widehat{\IC}_{t}.
  \label{eq:loss}
\end{equation}
The first term fixes scale; the second rewards ordering at each timestamp.
Other calibration and ranking surrogates fit the same construction, as detailed
in Section~\ref{sec:design}. Sweeping $\lambda$ traces the
calibration-ranking frontier in Figure~1(b).

\subsection{Relation to Existing Objectives}
\label{sec:design}

\Cref{prop:separable} motivates a two-term objective while leaving the
specific choice open. Writing the objective as
$\mathcal{L}=\mathcal{C}(\yhat,y;w)-\lambda\,\mathcal{S}(\yhat,y;w)$, there are
four independent axes. \Cref{tab:design} places published alternatives and the
evaluated \method variants in this space.

\begin{table}[!htbp]
  \centering
  \small
  \begin{tabularx}{\textwidth}{@{}l l >{\raggedright\arraybackslash}X l@{}}
    \toprule
    \tablehead{Axis} & \tablehead{Choices} & \tablehead{Prior art}
      & \tablehead{Ours} \\
    \midrule
    Calibration $\mathcal{C}$ & Squared error, absolute error, Huber, quantile
      & \citet{feng2019temporal} squared error
      & Squared error \\
    structure $\mathcal{S}$ & Pearson, Spearman, pairwise hinge, listwise
      & \citet{feng2019temporal} hinge; \citet{lin2026lambdarankic},
        \citet{yang2020qlib} rank correlation
      & differentiable Pearson \\
    Weight $w$ & Uniform, theory-derived $c_{t}$, learned, inverse-variance
      & \citet{kendall2018multitask} learned
      & Uniform; $c_{t}$ variant \\
    Tradeoff $\lambda$ & Fixed, swept
      & \citet{feng2019temporal} fixed
      & Swept, 7 points \\
    \bottomrule
  \end{tabularx}
  \caption{Calibration-ranking objective design space.}
  \label{tab:design}
\end{table}

The endpoints of the $\lambda$ range correspond to the two single-term
objectives, which are trained separately and reported as comparators.
Huber calibration is a natural extension because robust fidelity terms can
reduce the influence of large target innovations.

\FloatBarrier
\section{Experimental Design}
\label{sec:experiments}

\subsection{\dataset}
\label{subsec:finance1k}

\dataset contains hourly observations for 1{,}000 US equities over $28{,}510$
steps running from 2015 to early 2026. Its two aligned targets are the closing-price
log return, $\log(p_t/p_{t-1})$, and the log-volume change,
$\log(v_t/v_{t-1})$. The split is chronological: the first $19{,}957$ steps,
ending in late 2022, form the training set, and the following $8{,}553$ form the
test set. From the previous $96$ hourly observations of one series, the task is
to predict its target at the next step. Evaluation uses every fourth test step.
The calendar grid contains 684 weekend and holiday evaluation steps with zero
cross-sectional spread; all reported IC values use the remaining 1{,}455
trading hours.

\subsection{Forecasters and matched training protocol}
\label{subsec:forecaster}

The controlled reference encoder maps a length-$L{=}96$ context window of one
series to a next-step scalar forecast for that same series: the window is split
into $6$ patches of length $16$, each linearly embedded to width $d{=}128$ with a learned positional
term, encoded by a $3$-layer Transformer ($4$ heads, Gaussian error linear unit
activations, dropout $0.1$), mean-pooled over patches, and passed through a
LayerNorm$+$linear head ($\approx0.38$M parameters). In the weighted \method
variant, the scalar proxy $c_t$ is concatenated to the pooled representation
before the head. This controlled encoder is used only to isolate objective
effects and is not proposed as a new forecasting architecture. Both loss terms are weighted by
$c_t=\sqrt{\widehat{\SNR}_t/(1+\widehat{\SNR}_t)}$, computed from a 24-hour
trailing volatility window and normalized on training statistics. The main
controlled comparison uses the uniform objective in \eqref{eq:loss} without this
input. A matched ablation finds no detectable contribution from the weighting.

For the twelve standard trainable families, a shared two-layer scalar mixer applies
the same conditioning without modifying the backbone. All supervised models
receive the same univariate history for each series and share parameters across
series. We train with AdamW at learning rate $3\times10^{-4}$, weight decay
$10^{-4}$, batch size 24 complete cross-sections, and 1{,}500 steps over seeds
11, 23, and 42. The objective is the only change in the controlled comparison.
Foundation-model fine-tuning uses learning rate $10^{-6}$ and one timestamp per
batch.

We report squared error, raw amplitude from \eqref{eq:raw-amplitude}, and
Pearson IC computed within each trading hour and then averaged over hours.
Computing IC after pooling all $(i,t)$ pairs would mix temporal and
cross-sectional variation and answer a different question.

\subsection{Synthetic verification design}
\label{subsec:synthetic-design}

The real targets do not reveal their conditional means, so neither best-scaled
amplitude nor the predictability ceiling can be observed directly. We therefore
generate panels with known signal, noise, and model error. The sweep spans two
decades of signal-to-noise ratio and three levels of forecaster skill, with
fitting and evaluation on disjoint draws. Section~\ref{sec:results} reports the
quantitative verification.

\paragraph{Generator specification.}
\label{subsec:generator}
For $N$ series over $T$ time steps, under the common-SNR design, the
per-series noise scale is lognormal, the signal variance scales with it, and the model-error scale
sets $Q_{i}$ and $\rho_{i}$:
\begin{align}
  \sigma_{i} &= \exp\!\big(\log\sigma_{\mathrm{med}}
    + \mathcal{N}(0,\sigma_{\mathrm{disp}}^{2})\big), &
  v_{i} &= \SNR\cdot\sigma_{i}^{2},
  \label{eq:generator-scale} \\
  \mu_{ti} &= \mathcal{N}(0,1)\cdot\sqrt{v_{i}}, &
  \eps_{ti} &= \mathcal{N}(0,1)\cdot\sigma_{i},
  \label{eq:generator-signal-noise} \\
  \tau_{i} &= \sqrt{q_{\mathrm{frac}}\,v_{i}}\cdot
    \exp\!\big(\mathcal{N}(0,q_{\mathrm{disp}}^{2})\big), &
  \delta_{ti} &= \mathcal{N}(0,1)\cdot\tau_{i},
  \label{eq:generator-model-error}
\end{align}
with $y=\mu+\eps$, $\yhat=\mu+\delta$, and $\delta\perp\eps$
(satisfying Assumption~\ref{a:noleak}). Then $Q_{i}=\tau_{i}^{2}$ and
$\rho_{i}=\sqrt{v_{i}/(v_{i}+\tau_{i}^{2})}$. Each experiment draws an
independent random number generator (RNG) stream keyed by experiment name
from the global seed~$11$, so results are reproducible regardless of call order.

\paragraph{Synthetic predictor.}
Each experiment uses $\yhat=\mu+\delta$, built by adding controlled
independent error $\delta$ to the true signal $\mu$, so its signal alignment
$\rho$ is set by the single knob $q_{\mathrm{frac}}$ (with $\rho=1$ at
$q_{\mathrm{frac}}=0$). Known skill and noise make the propositions directly
testable against ground truth. Real-model results appear in
\Cref{sec:comprehensive}.

\paragraph{Locked synthetic parameters.}
\label{subsec:synthetic-params}
\begin{figure}[!t]
  \centering
  \includegraphics[width=\textwidth]{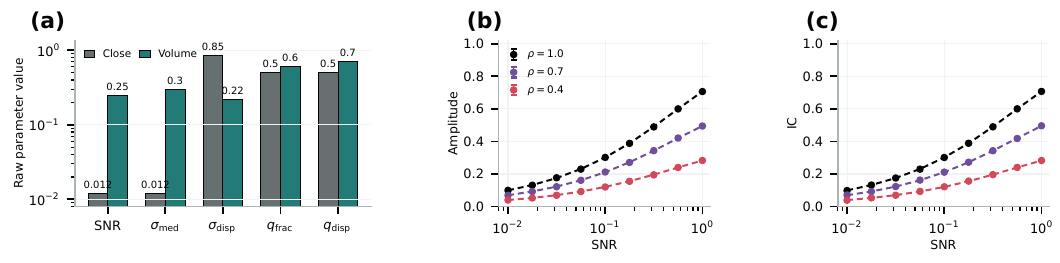}
  \caption{\textbf{Synthetic verification of the predictability mechanism.}
  \textbf{(a)} Parameters for the low-SNR close and higher-SNR volume regimes.
  \textbf{(b)} Raw amplitude and \textbf{(c)} cross-sectional IC across SNR and
  forecaster-skill levels. Markers show three-seed means, bars one standard
  deviation, and dashed curves the analytical predictions.}
  \label{fig:section7-evidence}
\end{figure}

\Cref{fig:section7-evidence}(a) sets close as the low-SNR regime: its signal variance is only
$1.2\%$ of the noise variance and its larger dispersion creates heterogeneous
noise floors across series. Volume has a $20\times$ larger SNR and more
homogeneous noise. This contrast isolates the predicted transition from
MSE-IC decoupling to partial coupling without changing the generator or
evaluation metrics.

\FloatBarrier
\subsection{Additional evaluation protocols}
\label{subsec:additional-evaluation}

\paragraph{Held-out decision return.}
The training objective uses Pearson IC, so we also evaluate a downstream
functional that is absent from the loss. At each timestamp we subtract the
cross-sectional mean from the forecast, so the resulting weights sum to zero,
and divide by their total absolute size, so they sum to one in magnitude.
Applying those weights to the realized returns gives a return per timestamp. The
reported statistic is its mean divided by its standard deviation, the Sharpe
ratio of that decision rule. These archived runs use the
weighted controlled-encoder protocol shared by the cross-architecture experiment.

\paragraph{Post-hoc amplitude sweep.}
The headline controlled comparison uses the uniform objective at the
prespecified $\lambda=1$. Separately, we examine the earlier weighted-objective
variant across six values of $\lambda$ on the evaluation steps. The sweep is
post hoc and does not select the headline test result.

\paragraph{Computing infrastructure.}
Every run executes on a single NVIDIA RTX Pro 6000 with one visible CUDA device,
and the OpenMP, MKL, OpenBLAS, and NumExpr thread counts are pinned to one before
any numerical import so that a seeded run repeats exactly.

\FloatBarrier
\section{Empirical Results}
\label{sec:results}

We first isolate the objective on the controlled encoder, then test the
amplitude identity synthetically and across public foundation-model benchmarks.
The final comparisons examine a held-out decision functional, the weighting
ablation, and the high-predictability volume target. Comprehensive
cross-architecture evidence follows in \Cref{sec:comprehensive}.

\begin{figure}[!t]
  \centering
  \includegraphics[width=\textwidth]{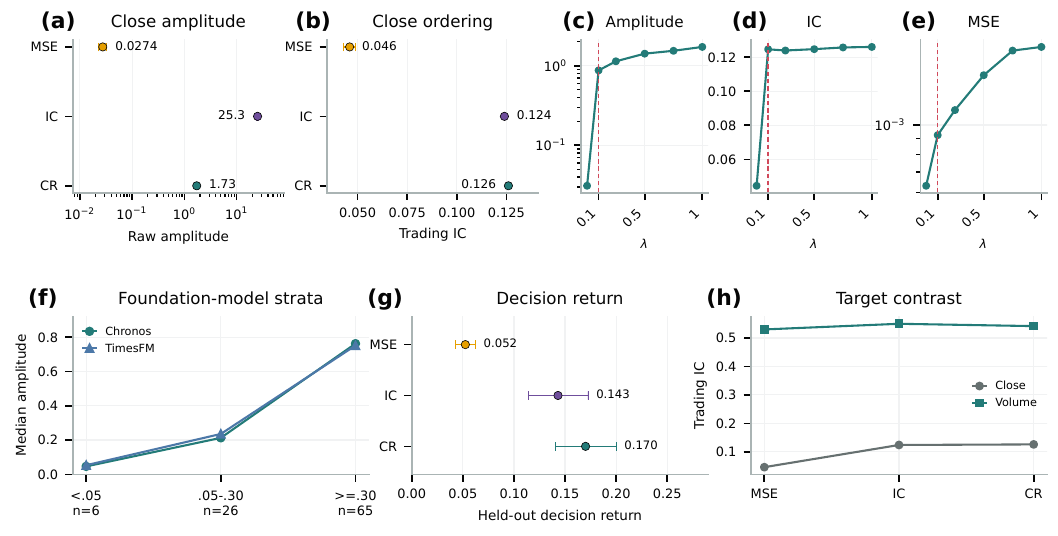}
  \caption{\textbf{Empirical evidence for forecast collapse and \method.}
  \textbf{(a,b)} Raw amplitude and cross-sectional IC under MSE, IC, and
  \method. \textbf{(c--e)} The calibration-ranking tradeoff as $\lambda$
  varies. \textbf{(f)} TSFM amplitude across achieved-$R^{2}$ strata.
  \textbf{(g)} Held-out decision return. \textbf{(h)} Return against volume as
  a target contrast. Error bars are one standard deviation over three seeds
  where available.}
  \label{fig:section8-evidence}
\end{figure}

\subsection{Controlled objective comparison on \dataset}

\begin{table}[htbp]
  \centering
  \small
  \begin{tabular}{llccc}
    \toprule
    \tablehead{Target} & \tablehead{Objective} & \tablehead{$\ampraw\to1$} & \tablehead{$\IC\uparrow$} & \tablehead{$\MSE\downarrow$} \\
    \midrule
    Close return & MSE & $0.027\mathbin{\pm}0.005$ & $0.046\mathbin{\pm}0.003$ & $(4.30\mathbin{\pm}0.13)\!\times10^{-4}$ \\
     & IC & $25.260\mathbin{\pm}1.114$ & $0.124\mathbin{\pm}0.000$ & $(4.22\mathbin{\pm}2.84)\!\times10^{-1}$ \\
     & \method & $1.836\mathbin{\pm}0.118$ & $0.126\mathbin{\pm}0.002$ & $(1.43\mathbin{\pm}0.06)\!\times10^{-3}$ \\
     & \method (weighted) & $1.726\mathbin{\pm}0.106$ & $0.126\mathbin{\pm}0.000$ & $(1.29\mathbin{\pm}0.12)\!\times10^{-3}$ \\
    \midrule
    Volume change & MSE & $0.391\mathbin{\pm}0.011$ & $0.530\mathbin{\pm}0.003$ & $(3.19\mathbin{\pm}0.02)\!\times10^{0}$ \\
     & IC & $0.233\mathbin{\pm}0.013$ & $0.550\mathbin{\pm}0.004$ & $(7.08\mathbin{\pm}0.25)\!\times10^{0}$ \\
     & \method & $0.391\mathbin{\pm}0.009$ & $0.538\mathbin{\pm}0.002$ & $(3.18\mathbin{\pm}0.01)\!\times10^{0}$ \\
     & \method (weighted) & $0.393\mathbin{\pm}0.006$ & $0.542\mathbin{\pm}0.002$ & $(3.14\mathbin{\pm}0.01)\!\times10^{0}$ \\
    \bottomrule
  \end{tabular}
  \caption{\textbf{Objective comparison on the reference backbone.}
    Raw amplitude and IC are stated on trading rows, squared error on
    the full evaluation grid. \method is the uniform objective of
    \eqref{eq:loss} at $\lambda=1$; the weighted row is the
    per-timestamp weighted variant. Mean $\mathbin{\pm}$ one standard
    deviation over three seeds.}
  \label{tab:reference-objectives}
\end{table}

\Cref{tab:reference-objectives} and \Cref{fig:section8-evidence}(a,b) expose both halves of the result. MSE produces a nearly flat
forecast, at $0.027$ of target amplitude and IC $0.046$. IC training raises IC to
$0.124$ but leaves scale unidentified, reaching $25\times$ the target amplitude.
Uniform \method at
the prespecified $\lambda=1$ reaches the same ordering, IC $0.126$, while keeping raw
amplitude within a factor of two of the target. Pure ranking incurs over two
orders of magnitude more squared error than \method.

For the MSE row, the ratio of the reported mean IC to mean raw amplitude is
$0.046/0.027=1.68$. This is a descriptive scale gap rather than
$s_t^\star$: the plotted entries average timestamp-level quantities, whereas
$s_t^\star$ is a ratio of covariance to variance at each timestamp. Under a
common-slope approximation, the ratio says that an additional $1.7\times$
rescaling would be optimal. The observed collapse therefore includes finite-fit
scale error beyond the best-scaled attenuation in Proposition~\ref{prop:collapse}.

These objectives answer different questions. MSE shows the forecast selected by a
calibration objective when predictable variance is small. IC verifies that the
architecture and inputs can recover substantially more ordering, while its raw
amplitude demonstrates the scale indeterminacy of correlation. The composite
shows that the ordering gain does not require accepting that indeterminacy. It
is an operating point on a frontier; the downstream cost of amplitude error
determines which point is useful.

\FloatBarrier
\subsection{The calibration-ranking frontier}

The seven nonnegative values of $\lambda$ in Figure~1(b) are evaluated
at three seeds. For the cross-architecture records, IC is restated on trading hours while MSE remains on the full evaluation grid; MSE has no uniform
restatement factor. The first nonzero ranking weight produces most of the IC
gain; larger weights mainly increase amplitude and squared error. The
single-term MSE and IC objectives are trained separately and appear at the two
extremes. The full MSE range is retained on the log-scaled axis, including the
high-error IC-only points on the right. The red line connects the empirically
non-dominated points across all backbone objectives. The cyan line
traces the seven-point Autoformer sweep, while the remaining points show matched
MSE and \method runs for the other eleven backbones and the zero-shot or
fine-tuned foundation-model baselines. Red therefore identifies the best
observed calibration-ranking tradeoffs regardless of architecture; cyan
holds the architecture fixed and isolates movement caused by the objective.
Autoformer is highlighted because its seven mean MSE values are strictly ordered
and span $3.05\times$, keeping all operating points legible. The
twelve-backbone endpoint movements appear in \Cref{tab:arch-ablation-sweep};
the plotted sweep includes all seven points and their three-seed uncertainty.
All four red points come from Non-stationary Transformer. Their near-vertical
shape is consistent with its stationarization preserving forecast scale while
the ranking term changes cross-sectional order.

\paragraph{Post-hoc weighted-objective sweep.}
The headline controlled comparison uses the uniform objective at the
prespecified $\lambda=1$. Separately, we examine the earlier weighted-objective
variant across six values of $\lambda$ on the evaluation steps.
\Cref{fig:section8-evidence}(c--e) reports trading-hour raw amplitude, IC,
and squared error,
averaged over three seeds. This sweep is post hoc and does not select the
headline test result.

\FloatBarrier

IC reaches $0.1246$ by $\lambda=0.1$ and changes by less than $0.002$ through
$\lambda=1$, while raw amplitude continues to rise. At $\lambda=0.1$, the
forecast reaches $87.5\%$ of target amplitude with about half the squared error
of the $\lambda=1$ setting. The measured points bracket unit amplitude between
$\lambda=0.1$ and $0.25$; linear interpolation places the crossing near
$\lambda=0.17$. This descriptive crossing illustrates the validation criterion
in \Cref{sec:discussion}; selecting an operating point for deployment still
requires a validation split taken by date.
\Cref{fig:section8-evidence}(c--e) makes this asymmetry visible directly: IC is nearly horizontal
after the first nonzero weight, whereas amplitude and MSE continue to grow.

\subsection{Synthetic verification}

Across the synthetic grid, \eqref{eq:collapse} predicts held-out best-scaled
amplitude to within $0.0015$ and correlation to within $0.0021$. This checks the
quantitative identity under controlled conditions; the GIFT-Eval scan in
\Cref{fig:section8-evidence}(f) supplies complementary evidence from pretrained models on real
benchmark data.

\Cref{fig:section7-evidence}(b,c) plots the sweep, which spans two decades of signal-to-noise
ratio and three levels of forecaster skill. It is fitted and evaluated on
disjoint halves, so the amplitude ratio is a genuine out-of-sample measurement
rather than an in-sample slope. At the low end of the sweep even a perfect
ranker retains under a tenth of the target's amplitude.

\FloatBarrier
\subsection{Foundation-model evidence}

TimesFM and Chronos are evaluated zero-shot on Finance1K and across the 97
GIFT-Eval configurations. The benchmark analysis uses each model's own point
forecast after one fixed stationarization rule. We compare raw amplitude with the
$R^{2}$ achieved by a fixed set of fitted baselines on the same configuration.
Because that set may miss available structure, a large value shows that a
target is predictable, while a small value cannot show it is unpredictable.
The association in \Cref{fig:section8-evidence}(f) is therefore a cross-dataset regularity; the
exact bound remains a statement about best-scaled forecasts under a declared
information set.

For an empirical summary, we divide achieved $R^{2}$ into three descriptive
strata using the conventions $0.30$ and $0.05$. \Cref{fig:section8-evidence}(f)
reports their median raw amplitude across all 97 observations.

\paragraph{Raw amplitude follows achieved predictability empirically.}
We scan all 97 GIFT-Eval configurations \citep{aksu2024gift} under a fixed
stationarization rule. The fitted baselines attain median $R^{2}=0.56$, while the
same procedure attains $0.0044$ on Finance1K close returns. These achieved
values lower-bound the predictability available to a richer forecaster; a small
value therefore does not prove the ceiling is low.

\Cref{fig:section8-evidence}(f) compares those achieved values with the raw amplitude output
by two pretrained models. Amplitude correlates with $\sqrt{R^{2}}$ at $0.88$
for Chronos and $0.87$ for TimesFM. In the 65 configurations with achieved
$R^{2}\ge0.30$, both models retain about three quarters of target amplitude.
In the six configurations with achieved $R^{2}<0.05$, they retain under six
percent. The cutoffs are descriptive strata, not derived thresholds. This
result establishes a strong empirical association. It does not test the upper
bound in Proposition~\ref{prop:collapse}.

\FloatBarrier
\subsection{Decision metric, ablation, and negative control}

As an out-of-objective check, we turn each forecast into a decision: subtract
the cross-sectional mean so the weights sum to zero, scale them to unit total
size, and apply them to the realized returns. In the archived matched
controlled runs, the held-out decision return rises from
$0.052\pm0.010$ under MSE to $0.143\pm0.029$ under IC and
$0.170\pm0.030$ under weighted \method. This functional is absent from
\eqref{eq:loss}; its definition is given in
\Cref{subsec:additional-evaluation}. The separation in
\Cref{fig:section8-evidence}(g) is important because this return is not
optimized directly: the composite objective improves the downstream functional after
the loss has already balanced amplitude and ordering.

\FloatBarrier

A matched ablation replaces the theory-motivated per-timestamp weighting in the
original implementation with uniform weights. Weighted minus uniform IC is
$-0.0006$ on close and $+0.0014$ on volume, against weighted-arm seed ranges
of $0.0019$ and $0.0050$, respectively; raw amplitude is similar. The gain
therefore comes from scoring cross-sectional structure. We use the simpler
uniform objective in \eqref{eq:loss}; the complete cross-architecture ablation
appears in \Cref{sec:comprehensive}.

\paragraph{Negative control.}
\Cref{fig:section8-evidence}(h) shows the target contrast. On volume changes
from the same panel, MSE reaches
$(\ampraw,\IC)=(0.391,0.530)$ and uniform \method reaches $(0.391,0.538)$
(\Cref{tab:reference-objectives}).
The near-identical amplitude and modest IC difference contrast with close
returns under the same split and protocol. When more variation is predictable,
MSE does not produce the near-zero raw amplitude seen in \Cref{fig:section8-evidence}(a).
Because MSE already recovers strong ordering on this target, the cross-sectional
term has little remaining IC to recover; the small gain is the expected
negative-control result.

\FloatBarrier
\section{Comprehensive Cross-Architecture Evidence}
\label{sec:comprehensive}

This section reports every aggregate in the unified empirical protocol.
All supervised entries use seeds $11,23,42$ and report mean $\pm$ one
standard deviation. \Cref{tab:arch} compares the three training objectives,
while \Cref{tab:arch-ablation-sweep} separates the conditioning $\Delta$IC
contrast from the $\lambda=0$ and $1$ endpoints. Together, the two tables and
Figure~1(b) report the objective comparison, the complete conditioning matrix,
and the seven-point sweeps over all twelve backbones.

\begin{table}[!t]
  \belowrulesep=2pt
  \aboverulesep=0.5pt
  \centering
  \caption{\textbf{Objective comparison across twelve forecasting
    models.} Mean $\mathbin{\pm}$ one standard deviation over three
    seeds. IC is evaluated on trading hours; MSE uses the full
    evaluation grid.}
  \label{tab:arch}
  \setlength{\tabcolsep}{5pt}
  \renewcommand{\arraystretch}{1.18}
  \resizebox{\linewidth}{!}{%
  \begin{tabular}{@{}l cc @{\hspace{8pt}} cc @{\hspace{8pt}} cc@{}}
    \toprule
    \multicolumn{1}{c}{\multirow{2}[0]{*}{\tablehead{Backbone}}} & \multicolumn{2}{c}{\tablehead{MSE loss}} & \multicolumn{2}{c}{\tablehead{IC loss}} & \multicolumn{2}{c}{\tablehead{CalibRank}} \\
    \cmidrule(lr){2-3}\cmidrule(lr){4-5}\cmidrule(l){6-7}
    & \tablehead{$\IC\uparrow$} & \tablehead{$\MSE\downarrow$} & \tablehead{$\IC\uparrow$} & \tablehead{$\MSE\downarrow$} & \tablehead{$\IC\uparrow$} & \tablehead{$\MSE\downarrow$} \\
    \midrule
    DLinear~\citep{zeng2023transformers} & $0.096\mathbin{\pm}0.024$ & $(4.12\mathbin{\pm}0.03)\!\times10^{-4}$ & $0.129\mathbin{\pm}0.000$ & $(1.82\mathbin{\pm}1.58)\!\times10^{-2}$ & $0.127\mathbin{\pm}0.004$ & $(5.20\mathbin{\pm}1.84)\!\times10^{-4}$ \\
    PatchTST~\citep{nie2022patchtst} & $0.082\mathbin{\pm}0.016$ & $(4.79\mathbin{\pm}1.10)\!\times10^{-4}$ & $0.123\mathbin{\pm}0.007$ & $(4.41\mathbin{\pm}0.85)\!\times10^{-1}$ & $0.122\mathbin{\pm}0.008$ & $(5.02\mathbin{\pm}0.32)\!\times10^{-4}$ \\
    iTransformer~\citep{liu2023itransformer} & $0.091\mathbin{\pm}0.002$ & $(4.33\mathbin{\pm}0.08)\!\times10^{-4}$ & $0.129\mathbin{\pm}0.002$ & $(2.35\mathbin{\pm}0.38)\!\times10^{-1}$ & $0.127\mathbin{\pm}0.004$ & $(5.53\mathbin{\pm}0.60)\!\times10^{-4}$ \\
    TimeMixer~\citep{wang2024timemixer} & $0.110\mathbin{\pm}0.003$ & $(4.14\mathbin{\pm}0.03)\!\times10^{-4}$ & $0.129\mathbin{\pm}0.000$ & $(1.07\mathbin{\pm}0.22)\!\times10^{-2}$ & $0.127\mathbin{\pm}0.003$ & $(5.24\mathbin{\pm}0.81)\!\times10^{-4}$ \\
    Autoformer~\citep{wu2021autoformer} & $0.021\mathbin{\pm}0.007$ & $(4.30\mathbin{\pm}0.09)\!\times10^{-4}$ & $0.096\mathbin{\pm}0.008$ & $(3.58\mathbin{\pm}0.63)\!\times10^{-1}$ & $0.105\mathbin{\pm}0.003$ & $(7.86\mathbin{\pm}2.31)\!\times10^{-4}$ \\
    Transformer~\citep{vaswani2017attention} & $-0.011\mathbin{\pm}0.023$ & $(4.26\mathbin{\pm}0.14)\!\times10^{-4}$ & $0.121\mathbin{\pm}0.001$ & $(4.45\mathbin{\pm}0.47)\!\times10^{-1}$ & $0.120\mathbin{\pm}0.002$ & $(5.91\mathbin{\pm}0.50)\!\times10^{-4}$ \\
    Informer~\citep{zhou2021informer} & $-0.004\mathbin{\pm}0.017$ & $(4.18\mathbin{\pm}0.02)\!\times10^{-4}$ & $0.117\mathbin{\pm}0.002$ & $(4.64\mathbin{\pm}3.91)\!\times10^{-1}$ & $0.105\mathbin{\pm}0.005$ & $(8.26\mathbin{\pm}1.17)\!\times10^{-4}$ \\
    FEDformer~\citep{zhou2022fedformer} & $0.024\mathbin{\pm}0.013$ & $(5.43\mathbin{\pm}1.19)\!\times10^{-4}$ & $0.120\mathbin{\pm}0.001$ & $(1.18\mathbin{\pm}0.95)\!\times10^{0}$ & $0.117\mathbin{\pm}0.004$ & $(5.77\mathbin{\pm}1.89)\!\times10^{-4}$ \\
    Nonstat.\ Transformer~\citep{liu2022nonstationary} & $0.118\mathbin{\pm}0.008$ & $(4.09\mathbin{\pm}0.01)\!\times10^{-4}$ & $0.148\mathbin{\pm}0.001$ & $(5.27\mathbin{\pm}0.14)\!\times10^{-4}$ & $0.147\mathbin{\pm}0.002$ & $(4.26\mathbin{\pm}0.28)\!\times10^{-4}$ \\
    Reformer~\citep{kitaev2020reformer} & $0.062\mathbin{\pm}0.029$ & $(4.21\mathbin{\pm}0.10)\!\times10^{-4}$ & $0.119\mathbin{\pm}0.001$ & $(1.50\mathbin{\pm}0.28)\!\times10^{-1}$ & $0.121\mathbin{\pm}0.002$ & $(5.82\mathbin{\pm}1.17)\!\times10^{-4}$ \\
    LightTS~\citep{zhang2022lightts} & $0.092\mathbin{\pm}0.009$ & $(4.12\mathbin{\pm}0.00)\!\times10^{-4}$ & $0.128\mathbin{\pm}0.001$ & $(2.63\mathbin{\pm}4.43)\!\times10^{0}$ & $0.128\mathbin{\pm}0.001$ & $(5.03\mathbin{\pm}1.04)\!\times10^{-4}$ \\
    TSMixer~\citep{chen2023tsmixer} & $0.062\mathbin{\pm}0.010$ & $(4.15\mathbin{\pm}0.03)\!\times10^{-4}$ & $0.131\mathbin{\pm}0.000$ & $(4.11\mathbin{\pm}1.13)\!\times10^{-3}$ & $0.130\mathbin{\pm}0.001$ & $(5.35\mathbin{\pm}1.43)\!\times10^{-4}$ \\
    \bottomrule
  \end{tabular}%
  }
\end{table}

\subsection{Matched objectives across backbones}

Figure~1(b) distinguishes the red empirical Pareto frontier from the cyan
Autoformer sweep. The red line reports the best observed tradeoffs over the full
backbone set. The cyan path keeps the architecture fixed, so its
movement isolates the effect of changing $\lambda$. We repeat the matched MSE
and \method comparison across recurrent, convolutional, mixing, linear, and
attention-based architectures under the same context, optimizer, schedule, and
three-seed protocol. IC values in \Cref{tab:arch} use trading hours; MSE values use the full evaluation grid, matching the archived
cross-architecture records.

\Cref{tab:arch} reports the result. Under squared error the
backbones span a wide range of IC, but replacing MSE with \method improves every
one. The mean IC rises from $0.062$ to $0.123$, while mean squared error changes
from $4.3\times10^{-4}$ to $5.8\times10^{-4}$. The latter is over 800 times
smaller than the IC-only mean of $4.9\times10^{-1}$. The consistent
within-backbone gain makes architecture an unlikely explanation for the
ordering failure.

\subsection{Conditioning ablation and target contrast}

\paragraph{What the ablation shows.}
\Cref{tab:arch-ablation-sweep} reports conditioning-minus-no-conditioning
$\Delta$IC for all twelve backbones and both targets, together with every
$\Delta$MSE and seed standard deviation. The close deltas straddle zero,
while volume deltas are more often favorable. This is expected rather than
contradictory: hourly close volatility has little lagged predictability, so its
proxy input is often noisy; volume persistence makes the same input more
informative. The matched comparison separates this conditioning limitation from
the core \method{} coupling, whose weighted objective is unchanged in the
ablation.

\paragraph{Why volume looks different from close.}
The held-out prediction trajectories show that the MSE forecasts have more
visible variation than on close because volume log-change
has substantially higher SNR: its conditional mean occupies a larger fraction of realized
variance. Pure IC predictions still show avoidable scale drift because
correlation does not identify amplitude. \method tracks the realized scale more
closely by retaining a calibration anchor, and the seed ribbons show that the
distinction is stable across initializations rather than caused by a selected
trajectory.

\subsubsection{No-conditioning ablation by model and target}

For each backbone, we compute the mean and standard deviation of \method{} minus
its matched no-conditioning run. Positive $\Delta$IC and negative $\Delta$MSE
are favorable. The aggregate pattern is that the conditioning channel raises close IC
in only 4/12 backbones, but raises volume IC in 10/12 and lowers volume MSE in
9/12. Several close MSE improvements coexist with negligible or negative IC
changes because calibration can benefit from scale information even when that
information does not locate cross-sectional ranking skill. Large standard
deviations for Informer and Autoformer caution against interpreting isolated
signs as universal gains.

\begin{table}[!t]
  \belowrulesep=2pt
  \aboverulesep=0.5pt
  \centering
  \caption{\textbf{Conditioning ablation and calibration-ranking
    sweep.} Conditioning columns report CalibRank minus
    no-conditioning $\Delta$IC; sweep columns report the close-target
    endpoints at $\lambda=0$ and $\lambda=1$. Mean $\mathbin{\pm}$ one
    standard deviation over three seeds.}
  \label{tab:arch-ablation-sweep}
  \setlength{\tabcolsep}{5pt}
  \renewcommand{\arraystretch}{1.18}
  \resizebox{\linewidth}{!}{%
  \begin{tabular}{@{}l cc @{\hspace{10pt}} cc cc@{}}
    \toprule
    \multicolumn{1}{c}{\multirow{2}[0]{*}{\tablehead{Backbone}}} & \multicolumn{2}{c}{\tablehead{Conditioning ablation}} & \multicolumn{4}{c}{\tablehead{Close-target $\lambda$ sweep}} \\
    \cmidrule(lr){2-3}\cmidrule(l){4-7}
    & \tablehead{Close $\Delta\IC\uparrow$} & \tablehead{Volume $\Delta\IC\uparrow$} & \tablehead{$\IC(0)\uparrow$} & \tablehead{$\IC(1)\uparrow$} & \tablehead{$\MSE(0)\downarrow$} & \tablehead{$\MSE(1)\downarrow$} \\
    \midrule
    DLinear & $-0.001\mathbin{\pm}0.003$ & $+0.003\mathbin{\pm}0.004$ & $0.072\mathbin{\pm}0.016$ & $0.127\mathbin{\pm}0.004$ & $(4.74\mathbin{\pm}1.02)\!\times10^{-4}$ & $(5.20\mathbin{\pm}1.84)\!\times10^{-4}$ \\
    PatchTST & $+0.006\mathbin{\pm}0.002$ & $+0.000\mathbin{\pm}0.001$ & $0.049\mathbin{\pm}0.045$ & $0.122\mathbin{\pm}0.008$ & $(4.18\mathbin{\pm}0.03)\!\times10^{-4}$ & $(5.02\mathbin{\pm}0.32)\!\times10^{-4}$ \\
    iTransformer & $+0.002\mathbin{\pm}0.001$ & $+0.001\mathbin{\pm}0.001$ & $0.092\mathbin{\pm}0.013$ & $0.127\mathbin{\pm}0.004$ & $(4.15\mathbin{\pm}0.03)\!\times10^{-4}$ & $(5.53\mathbin{\pm}0.60)\!\times10^{-4}$ \\
    TimeMixer & $-0.001\mathbin{\pm}0.002$ & $+0.002\mathbin{\pm}0.000$ & $0.114\mathbin{\pm}0.005$ & $0.127\mathbin{\pm}0.003$ & $(4.11\mathbin{\pm}0.01)\!\times10^{-4}$ & $(5.24\mathbin{\pm}0.81)\!\times10^{-4}$ \\
    Autoformer & $+0.006\mathbin{\pm}0.004$ & $-0.004\mathbin{\pm}0.011$ & $0.041\mathbin{\pm}0.034$ & $0.105\mathbin{\pm}0.003$ & $(4.17\mathbin{\pm}0.02)\!\times10^{-4}$ & $(7.86\mathbin{\pm}2.31)\!\times10^{-4}$ \\
    Transformer & $-0.002\mathbin{\pm}0.002$ & $+0.006\mathbin{\pm}0.004$ & $0.006\mathbin{\pm}0.013$ & $0.120\mathbin{\pm}0.002$ & $(4.37\mathbin{\pm}0.24)\!\times10^{-4}$ & $(5.91\mathbin{\pm}0.50)\!\times10^{-4}$ \\
    Informer & $-0.023\mathbin{\pm}0.047$ & $+0.001\mathbin{\pm}0.003$ & $-0.003\mathbin{\pm}0.005$ & $0.105\mathbin{\pm}0.005$ & $(4.36\mathbin{\pm}0.21)\!\times10^{-4}$ & $(8.26\mathbin{\pm}1.17)\!\times10^{-4}$ \\
    FEDformer & $-0.001\mathbin{\pm}0.004$ & $+0.001\mathbin{\pm}0.003$ & $0.051\mathbin{\pm}0.048$ & $0.117\mathbin{\pm}0.004$ & $(4.17\mathbin{\pm}0.02)\!\times10^{-4}$ & $(5.77\mathbin{\pm}1.89)\!\times10^{-4}$ \\
    Nonstat.\ Transformer & $-0.000\mathbin{\pm}0.000$ & $+0.002\mathbin{\pm}0.002$ & $0.059\mathbin{\pm}0.061$ & $0.147\mathbin{\pm}0.002$ & $(4.15\mathbin{\pm}0.04)\!\times10^{-4}$ & $(4.26\mathbin{\pm}0.28)\!\times10^{-4}$ \\
    Reformer & $-0.002\mathbin{\pm}0.001$ & $+0.004\mathbin{\pm}0.002$ & $0.012\mathbin{\pm}0.032$ & $0.121\mathbin{\pm}0.002$ & $(4.62\mathbin{\pm}0.43)\!\times10^{-4}$ & $(5.82\mathbin{\pm}1.17)\!\times10^{-4}$ \\
    LightTS & $+0.000\mathbin{\pm}0.000$ & $-0.001\mathbin{\pm}0.006$ & $0.049\mathbin{\pm}0.011$ & $0.128\mathbin{\pm}0.001$ & $(4.18\mathbin{\pm}0.02)\!\times10^{-4}$ & $(5.03\mathbin{\pm}1.04)\!\times10^{-4}$ \\
    TSMixer & $-0.000\mathbin{\pm}0.001$ & $+0.001\mathbin{\pm}0.001$ & $0.060\mathbin{\pm}0.031$ & $0.130\mathbin{\pm}0.001$ & $(4.16\mathbin{\pm}0.01)\!\times10^{-4}$ & $(5.35\mathbin{\pm}1.43)\!\times10^{-4}$ \\
    \bottomrule
  \end{tabular}%
  }
\end{table}

\subsection{\texorpdfstring{$\lambda$}{lambda}-sweep by model}

\paragraph{A similar pattern across model families.}
The endpoint columns of \Cref{tab:arch-ablation-sweep}, together with the
seven-point sweeps, show the same pattern across backbones: the first nonzero ranking weight
produces almost the entire correlation gain, after which correlation plateaus
while squared error and amplitude keep rising. Differences in curvature reflect
backbone stability, and Informer has visibly larger seed dispersion. The table
keeps the endpoints readable, while Figure~1(b) displays the intermediate
operating points.

Moving from $\lambda=0$ to $\lambda=1$ raises close IC for every backbone; the
median gain is $0.0489$ at a median $1.280\times$ MSE multiplier. Most models
obtain nearly all of that ranking gain by $\lambda=0.25$--$0.5$, while larger
weights mainly increase MSE. This saturation occurs because once
cross-sectional correlation saturates, stronger correlation pressure adds
little and continues relaxing scale discipline. Non-stationary
Transformer is the clearest favorable case: its ranking gain is obtained with
almost no MSE penalty, suggesting that its normalization already supplies part
of the calibration control.

\FloatBarrier
\section{Discussion}
\label{sec:discussion}

\paragraph{Account for target predictability in evaluation.}
The same correlation can be weak on a predictable target and strong on an
unpredictable one. A predictability ceiling makes that distinction explicit,
provided it is estimated under a declared information set and at the same
aggregation level as the reported score. In a panel, this means comparing
per-timestamp cross-sectional IC with per-timestamp ceilings before averaging. We use
the pooled identity only to characterize pooled amplitude attenuation and do
not convert it into a percentage of the time-averaged cross-sectional IC.

\paragraph{Evaluate amplitude and joint structure.}
GIFT-Eval scores one series at a time. Although 43 of its 97 configurations are
multivariate, each is flattened before scoring, and none of its eleven metrics
measures raw amplitude or cross-sectional structure. A complete evaluation
should report $\ampraw$, a cross-sectional dependence score, and ordinary
per-series error. The first identifies attenuation or overshoot. The second asks
whether the forecast preserves relationships used by a downstream decision.
Neither can be reconstructed from a scalar average of per-series errors.

A multivariate benchmark can add these measurements without replacing its
existing metrics. It must preserve channel grouping and timestamps through
evaluation, compute the ordering score within each cross-section, and average
only afterward. It should also state whether amplitude is measured on raw output or after a fitted calibration map. These
choices prevent pooled correlation from mixing temporal and cross-sectional
effects and prevent post-hoc rescaling from hiding what the model outputs.

\paragraph{Evaluate the missing cross-series functional.}
Finance is a clear testbed because portfolio selection uses the ordering of a
cross-section, but the issue is broader. In environmental sensor networks,
clinical monitoring, energy systems, and traffic forecasting, decisions depend
on which channels move together or which deteriorate first. Benchmarks for
these settings should retain their multivariate structure through scoring.
Depending on the application, the relevant relationship may be correlation,
concordance, tail dependence, or a learned graph.

The target determines which metric belongs in the protocol. A portfolio selector
uses rank or correlation; an energy dispatcher may require calibrated joint
tails; a clinical alert system may care about which channels cross thresholds
together. There is no universal dependence score. The requirement is that
evaluation retain the relationship the decision uses instead of flattening the
problem into independent series.

\paragraph{Where to expect forecast collapse.}
The analysis gives two diagnostics rather than a universal claim about TSFMs.
First, estimate how much variance a reasonable set of baselines can recover under the
same information set. Low achieved $R^{2}$ is a warning sign for amplitude
attenuation, although it cannot prove that the true ceiling is low. Second,
identify whether the downstream task needs a relationship across channels
that the training and benchmark losses score. The strongest failure should
appear when both conditions hold: little predictable variance and an important
multivariate functional absent from the objective. Short-horizon returns are
one instance; demand residuals after strong seasonal adjustment, local sensor
innovations, and differences between closely related measurements are other
plausible targets.

The same diagnostics make the claim falsifiable. If a target clearly has
high predictable variance and a well-fitted MSE model still produces a nearly flat
forecast, the predictability explanation is insufficient and optimization,
regularization, or preprocessing becomes the leading suspect. If a pointwise
model preserves the relevant cross-sectional structure across architectures
and seeds, then the coupling invariance has not produced an empirical
failure in that setting. Conversely, a cross-sectional loss should improve the
targeted functional under a matched protocol, as it does for every backbone in
\Cref{tab:arch}. These checks separate a property of the statistical problem
from a defect in a particular implementation.

\paragraph{Choose the operating point on validation data.}
Increasing $\lambda$ initially recovers most of the available ordering and then
primarily increases amplitude and squared error. This creates a practical
selection rule: choose $\lambda$ on a validation split taken by date to meet a
declared amplitude tolerance, then report the untouched test result. The present
paper reports the prespecified $\lambda=1$ setting and treats the seven-point
sweep as a measured frontier, not as test-set model selection. The chosen operating point is
application-dependent: a ranking system may tolerate amplitude
error, whereas a portfolio or physical forecast may not.

\paragraph{Implications for model selection.}
Per-series error remains necessary: a model with good ordering can still produce
unusable magnitudes, as the IC-only point in \Cref{fig:section8-evidence}(a) shows. The reverse is
also true: an apparently calibrated marginal forecast may carry little
cross-sectional discrimination. Reporting both axes changes model selection
from a single leaderboard rank into a constrained choice. A practitioner can
set an admissible calibration range and maximize structure within it, or set a
minimum ordering score and minimize error. The frontier communicates the
available tradeoff without asserting that one scalar weight is correct for
every application.

\FloatBarrier
\section{Conclusion and Limitations}
\label{sec:conclusion}

Forecast collapse combines an amplitude failure with an ordering failure.
Predictability limits the amplitude of a best-scaled point forecast. Separately,
per-series risk does not identify cross-series coupling, even though
cross-sectional decisions depend on that coupling. Finance1K shows both
patterns: MSE forecasts are nearly flat, IC-only forecasts are badly
miscalibrated, and \method improves cross-sectional correlation while retaining
a calibration anchor.
TimesFM and Chronos show a parallel association between raw amplitude and
achieved predictability across 97 public benchmark configurations. Evaluation
should preserve and score multivariate structure as well as per-series error.

\paragraph{Practical consequence.}
A small forecast is not automatically a weak model: it can be the correctly
calibrated response to a target with little predictable variance. Nor does a
low per-series error guarantee that the forecast preserves the relationships a
decision needs. The useful audit is therefore two-dimensional. Measure what
the model outputs before calibration, and measure the relevant structure at the
level where the decision is made. If amplitude is attenuated in line with
predictability, changing architecture alone should not remove it. If structure
is missing, adding a matched cross-sectional term should matter more than rescaling
the output. This separation turns "flat forecasts" from a visual symptom into
two testable questions about information and objective design.

\paragraph{Limitations.}
The two parts of the argument have different empirical reach. The amplitude
identity is exact for best-scaled forecasts and is checked in simulation; the
raw outputs of two pretrained models show the predicted association across a
large public benchmark. Achieved $R^{2}$ from a fitted set of baselines is a lower
bound on available predictability, however, so the GIFT-Eval experiment does
not estimate the true ceiling or test the inequality in
\Cref{prop:collapse}. It shows where amplitude collapse appears in practice.

The cross-sectional diagnosis and remedy are demonstrated on one real panel.
The twelve-backbone experiment reduces the chance that the result is tied to
one architecture, but does not establish the same ordering failure in another
domain. Finance1K is stratified by volatility and requires long coverage. It is useful for testing predictability
and cross-sectional ranking, but it does not support a market-wide performance
claim. The Finance1K release package provides the aligned panels, exact
chronological split, ticker order, validity masks, and checksums needed to
reproduce the data contract.

\method targets one dependence functional, per-timestamp Pearson correlation. It
does not estimate a joint distribution, and applications concerned with rank,
tails, or calibrated scenarios may require a different cross-sectional term. The
frontier is measured for a simple scalarization and is not fundamental;
regression-compatible listwise objectives may dominate it. The remedy also
requires a meaningful cross-section at each timestamp and does not directly apply
to a single series.

Finally, aggregation matters under drift. Squared error pools by linearity,
whereas IC is a ratio and can show Simpson effects when pooled over time
\citep{robinson1950ecological}. A ceiling computed from average predictability
is also Jensen-loose because the square-root map in \eqref{eq:volpred} is
concave. Our empirical IC is therefore computed at each timestamp and averaged,
and we avoid comparing it with a pooled ceiling percentage. Time-varying
predictability changes where attenuation is strongest, while the per-timestamp identity in Proposition~\ref{prop:collapse} remains unchanged.

\section{Generative AI Usage}
\label{sec:genai}

We disclose all uses of generative AI in the preparation of this work. The
authors remain responsible for all of its content.

Generative AI assistants were used in two ways: to write and refactor code,
including the experiment harness, the figure and table renderers, and the
validation suite; and for writing assistance, including drafting, editing, and
improving clarity. They were not used to generate research ideas, to develop
the theory or its proofs, to design the experiments, or to interpret results.

All AI-assisted text and code was reviewed and edited by the authors, who
accept responsibility for its accuracy and integrity.

\bibliographystyle{assets/plainnat}
\bibliography{references}

\end{document}